\let\cite\citep
\newcommand{\R}{\mathbb{R}}
\newcommand{\E}{\mathbb{E}}
\newcommand{\cX}{\mathcal{X}}
\newcommand{\cD}{\mathcal{D}}
\newcommand{\cA}{\mathcal{A}}
\newcommand{\cY}{\mathcal{Y}}
\def\R{\mathbb{R}}
\def\ba{{\boldsymbol a}}
\def\btheta{{\boldsymbol \theta}}
\def\bg{{\boldsymbol g}}
\def\hbtheta{{\boldsymbol {\hat{\theta}}}}
\def\0{{\boldsymbol 0}}
\def\1{{\boldsymbol 1}}
\def\1{{\boldsymbol 1}}
\newtheorem{theorem}{Theorem}[section]
\newtheorem{lemma}[theorem]{Lemma}
\newtheorem{corollary}[theorem]{Corollary}
\newtheorem{propo}[theorem]{Proposition}
\newtheorem{remark}[theorem]{Remark}
\definecolor{gold}{RGB}{212,175,55}
\title{
Decision Theoretic Foundations for Conformal Prediction: \\
Optimal Uncertainty Quantification for Risk-Averse Agents}
\author{%
  Shayan Kiyani, George Pappas, Aaron Roth, Hamed Hassani \\
  University of Pennsylvania\\
  \texttt{\{shayank, pappasg, aaroth, hassani\}@seas.upenn.edu} \\
  }
\date{\today}
\begin{document}

\maketitle

\begin{abstract}



A fundamental question in data-driven decision making is how to quantify the uncertainty of predictions in ways that can usefully inform downstream action.  This interface between prediction uncertainty and decision-making is especially important in risk-sensitive domains, such as medicine. 
In this paper, we develop decision-theoretic foundations that connect uncertainty quantification using prediction sets with risk-averse decision-making. Specifically, we answer three fundamental questions:  (1) What is the correct notion of uncertainty quantification for risk-averse decision makers? We prove that prediction sets are optimal for decision makers who wish to optimize their value at risk. (2) What is the optimal policy that a risk averse decision maker should use to map prediction sets to actions? We show that a simple max-min decision policy is optimal for risk-averse decision makers. Finally, (3) How can we derive prediction sets that are optimal for such decision makers? We provide an exact characterization in the population regime and a distribution free finite-sample construction. Answering these questions naturally leads to an algorithm, \emph{Risk-Averse Calibration (RAC)}, which follows a provably optimal design for deriving action policies from predictions. RAC is designed to be both \emph{practical}—capable of leveraging the quality of predictions in a black-box manner to enhance downstream utility—and \emph{safe}—adhering to a user-defined risk threshold and optimizing the corresponding risk quantile of the user's downstream utility.
Finally, we experimentally demonstrate the significant advantages of RAC in applications such as medical diagnosis and recommendation systems. Specifically, we show that RAC achieves a substantially improved trade-off between safety and utility, offering higher utility compared to existing methods while maintaining the safety guarantee.

   
\end{abstract}

\section{Introduction}\label{intro}
 Predictions are frequently used to inform \emph{actions}. For example, in clinical medicine, patient data are used to predict diagnoses and outcomes when choosing treatments. In high-stakes cases—where an incorrect treatment decision could lead to serious complications or death—it is crucial not to rely solely on a model's predictions. Instead, decisions must account for the uncertainty in these predictions, opting for more conservative interventions when that uncertainty makes the potential outcomes (e.g., complications, side effects) highly variable. Connecting uncertain predictions to actionable, principled decisions is a significant challenge in  safety-critical domains, including medical diagnosis, finance, robotics, and control, and requires balancing safety with utility.  At one extreme, we could ensure safety by avoiding any action, essentially ignoring predictions and thus failing to be of practical use. At the other extreme, we could aggressively exploit predictions attempting to maximize expected utility, trading off high (expected) upside gains for significant downside risk --- at the cost of realizing poor outcomes with significant probability.  Balancing this trade-off requires optimally integrating prediction into decision making in a way that is risk-sensitive. To this end, we focus on the following question:
\begin{center}
\emph{What is the optimal interface between prediction and action that allows for navigating the trade-off between safety   and utility in high stakes applications?}
\end{center}


The optimal design of an action policy hinges on optimal uncertainty quantification.  Among the various methods for quantifying uncertainty, a widely adopted approach—driven by advances in conformal prediction—is to produce \emph{prediction sets} rather than point predictions. But what are prediction sets good for? In particular what kind of downstream decision-making process makes prediction sets the right language for communicating uncertainty? And given such a process, what is the optimal decision-making rule for transforming prediction sets into actions? 
To begin answering these questions, we first introduce our basic setting and notation. We are given a space of features (or covariates, contexts) $\mathcal{X}$ and a set of labels $\mathcal{Y}$, where the pair $(x, y) \in \mathcal{X} \times \mathcal{Y}$ is generated according to a distribution $\mathcal{D}$. A downstream decision maker has an action set $\mathcal{A}$ and a utility function $u: \mathcal{A} \times \mathcal{Y} \rightarrow \mathbb{R}$ that maps actions $a$ and realized labels $y$ to utilities $u(a, y)$, which the decision maker seeks to maximize. Upon observing $x \in \mathcal{X}$, the decision maker must take an action $a \in \mathcal{A}$ without observing the true label $y$, relying instead on predictions about $y$. Within this framework, we aim to answer the above questions.

In thinking about what kind of answers the above questions might have, it is useful to reflect on what we can say about \emph{calibrated forecasts}, an alternative way of quantifying uncertainty---that has its own limitations---but which has a rigorous decision-theoretic foundation.  Suppose that we are in a multi-class  classification setting, and that we represent labels using their one-hot encoding --- so that we can view both labels $y\in \cY$ and distributions over labels $p$ as vectors in the $k$-dimensional probability simplex. A forecasting rule $f:\cX\rightarrow [0,1]^k$ is said to be calibrated if for every prediction $\hat p \in [0,1]^k$ we have that: $\E_{(x,y) \sim \cD}[y | f(x) = \hat p] = \hat p$ --- in other words, calibrated forecasts are unbiased conditional on the value of the forecast. Then a simple consequence of calibration \cite{FV98,zhao2021calibrating,noarov2023high} is that \emph{for any expectation maximizing decision maker}, choosing the action that would maximize expected utility as if the forecast was correct is the optimal policy amongst all policies mapping forecasts to actions. Formally, if $f$ is calibrated, then applying the policy $BR_u(f(x)) = \arg\max_{a \in \cA} \E_{y \sim f(x)}[u(a, y)]$ obtains higher expected utility compared to applying any other policy $P:[0,1]^k\rightarrow \cA$ mapping forecasts to actions. In this sense, calibration is the right language for communicating uncertainty to \emph{expectation maximizing---i.e. risk neutral---agents}, and the right rule for such agents to ingest calibrated forecasts is to act as if they are correct specifications of the conditional label distribution. 

In contrast, we seek the right interface between predictions and actions for \emph{risk-averse agents}. Let \( a(\cdot) : \mathcal{X}\rightarrow \mathcal{A} \) be an action policy. We call \( \nu(\cdot) : \mathcal{X}\rightarrow \mathbb{R} \) a \emph{utility certificate} if it satisfies the following \emph{safety guarantee}:
\begin{align}\label{safety}
         \Pr [u(a(X), Y) \geq \nu(X)] \geq 1-\alpha.
\end{align}

In words, with probability at least \( 1-\alpha \), the utility of an agent following the policy \( a(x) \) is guaranteed to be at least \( \nu(x) \). Naturally, we aim to maximize the average value of the utility certificate \( \nu \) subject to satisfying the requirement in \eqref{safety} -- i.e., as the risk-averse agent, we  seek to maximize the average quantile of their utility, commonly referred to as the \emph{value at risk} in financial risk literature (see Section~\ref{Sec:problem} for details on the problem formulation). This will lead to the optimal trade-off between safety and utility, which is achieved by finding the action policy and utility certificate pair where the safety guarantee holds and the utility certificate is maximized. 

In practice, however, the true probability distribution that connects the actions to their utility values is unknown. Instead, the decision maker must rely on (uncertain) predictions to best balance the trade-off between safety and utility. The core challenge in this regard is to develop the right notion of uncertainty quantification for the predictions and optimal action policies based on  such uncertainty measures. 

We show that prediction sets are the right medium for communicating uncertainty to risk-averse decision makers who care about high-probability guarantees for their realized utility, i.e., the quantiles of their utility distribution as formulated in \eqref{safety}. In particular, we prove that finding optimal action policies whose utility is maximized subject to  satisfying the safety condition given in  \eqref{safety} is fundamentally equivalent to an optimal design of \emph{prediction sets} followed by a simple max-min action policy. This implies that prediction sets  constitute a sufficient statistic for designing safe action policies and thus encapsulate all the information necessary for risk-averse decision making. We then derive an explicit formulation for the optimal prediction sets based on which we develop a finite-sample algorithm to construct prediction sets with distribution-free safety guarantees. Put together, these results characterize the optimal interface between predictions and actions for risk-averse decision making as depicted in Figure~\ref{fig:pip}. 
In more detail:


\begin{figure}[t]
\centering
\includegraphics[width=\textwidth]{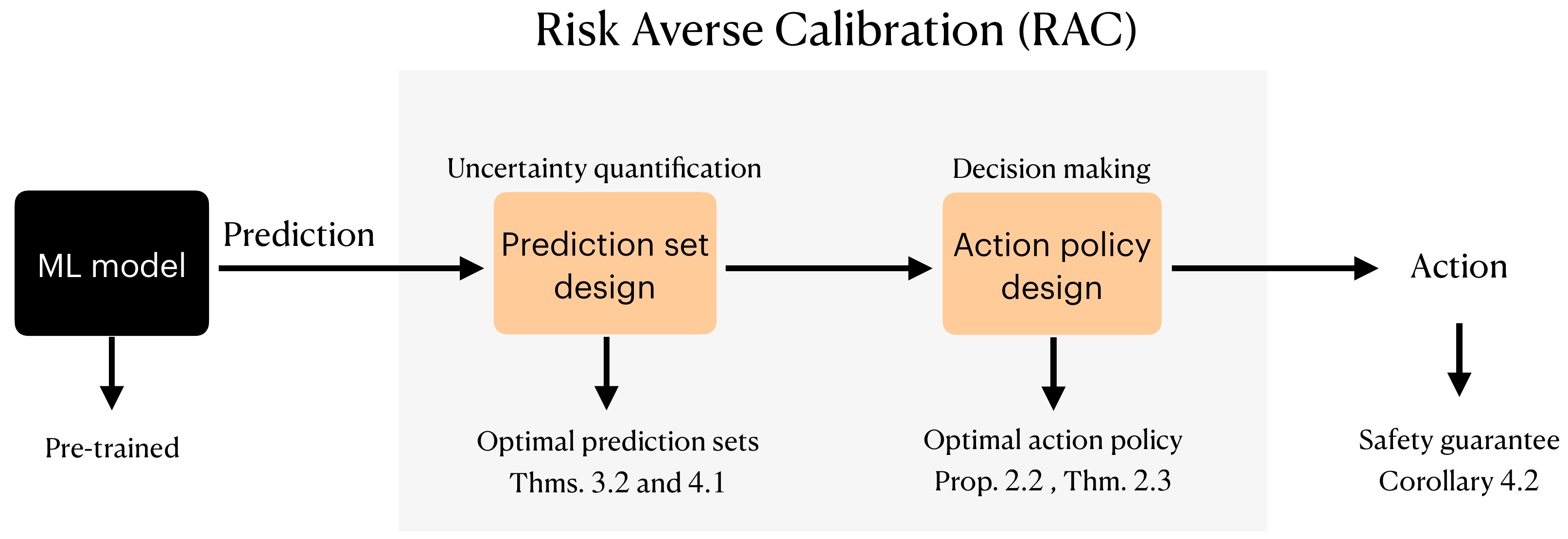}
\caption{RAC pipeline, an interface between prediction and action for high stakes applications.}
\label{fig:pip}
\end{figure}

\begin{enumerate}
    \item \textbf{Max-min decision rule.}  We show in Proposition \ref{maxmin_policy} that when given prediction sets $C(x)$ satisfying only a \emph{marginal} coverage guarantee, risk-averse decision makers should choose their action using a \emph{max-min decision rule}: i.e. they should choose the action $a$ that maximizes their utility $u(a,y)$ in the worst case over label outcomes $y \in C(x)$ contained within the prediction sets. In particular we show that this decision rule is minimax optimal over the set of data distributions for which the prediction rule's marginal coverage guarantee holds.
    \item \textbf{Prediction-set equivalence.} In Theorem~\ref{prediction_set_equivalence}, we show that the optimal pair of action policy and utility certificate can be derived from the max-min decision rule applied to a \emph{well-designed prediction set} with marginal coverage validity. This establishes that prediction sets are a sufficient statistic for safe decision making.
    
    
    
    \item \textbf{Optimal design of prediction sets.} With these results in hand, we focus on finding the prediction sets that optimize the downstream decision maker's target utility quantile, assuming that they are using the optimal max-min decision rule. 
    This optimization, which we call \emph{Risk Averse Conformal Prediction Optimization} is formulated in Section~\ref{Sec:equivalent}.
    We then use techniques from duality theory to derive an explicit formulation for the optimal prediction sets parametrized by a single-dimensional variable (see Theorem~\ref{strong_duality}). This characterization will be the basis for our finite-sample framework.
    \item \textbf{Finite-sample algorithm.} In Section \ref{sec:finite}, we introduce \emph{Risk Averse Calibration (RAC)},  a finite sample algorithm, which can exploit any black-box predictive model to derive action policies and utility certificates while providing a \emph{distribution-free} safety guarantee \eqref{safety}. This guarantee holds for any given utility function.
    
    \item \textbf{Experiments.} Finally, in two sets of experiments in \cref{sec:exp}, we compare the risk averse performance of RAC with several conformal prediction based methods, including \cite{cortes2024decision, romano2020classification, sadinle2019least}, as well as decision makers who act by best responding to a model's point prediction. We demonstrate the superior performance of RAC in handling the trade-off between safety and utility, meaning that our method delivers higher utility at every user-specified target quantile of the utility distribution. An Implementation of RAC can be accessed at the following link: 
\url{https://github.com/shayankiyani98/Risk-Averse-Calibration}.
\end{enumerate}

\subsection{Related Work}
The foundational idea of prediction sets can be traced back to early studies by \citet{Wilks1941, Wald1943, scheffe1945non, tukey1947non}. The initial concepts of conformal prediction (CP) were introduced in \citet{saunders1999transduction, vovk1999machine,vovk2005algorithmic}. With the advancement of machine learning, conformal prediction has become a widely adopted framework for constructing prediction sets \citep{Vovk2013,papadopoulos2002inductive,lei2018distribution,romano2020classification,romano2019conformalized,park2021pac, angelopoulos2020uncertainty}. There has been a growing body of work aiming to adapt conformal prediction methods for a range of decision-making problems. In the following, we will discuss the ones relevant to the present work. 


\textbf{Risk Control.} A growing line of research extends CP beyond coverage constraints to control more general risk measures \citep{lindemann2023safe, angelopoulos2022conformal, angelopoulos2021learn, cortes2024decision, lekeufack2024conformal, zecchin2024adaptive, blot2024automatically, zecchin2024localized}. In particular, \citet{angelopoulos2022conformal} propose conformal risk control for monotone risk measures over prediction sets, and \citet{cortes2024decision} extend this by constructing sets that satisfy coverage while achieving low risk. However,  these works  do not explicitly discuss which \emph{actions} their sets should inform or how to design these sets to best serve the decision maker.  Moreover, \citet{lindemann2023safe} applies conformal prediction to safe planning, and  \citet{lekeufack2024conformal} focuses on decisions parameterized by a single scalar, calibrated to control risk. However,  these works restrict their action policy to a \emph{predefined} low-dimensional family, leaving open the question of how to \emph{jointly} optimize over policy design and uncertainty quantification for risk-averse utility. 

In this paper, we fill this gap by addressing three core questions for a risk-averse decision maker: (1)~\emph{What is the correct notion of uncertainty quantification?} We prove that prediction sets are optimal for high-stakes decisions. (2)~\emph{How can we design these optimal sets?} We provide an exact population-level characterization and a distribution-free, finite-sample construction. (3)~\emph{What is the optimal policy given these sets?} We show that a simple max--min rule is optimal for risk-averse utility. In Section~\ref{sec:exp}, we  implement the most recent
approach in this direction \citep{cortes2024decision}, and demonstrate that our framework yields significantly more effective action policies.

On top of the fundamental differences we mentioned, there are also technical differences. After proving the equivalence of the risk-averse objective defined in Section \ref{Sec:fundamentals} to the prediction set optimization called RA-CPO in Section \ref{Sec:equivalent}, one might think we can define a risk function of the form $l(C) = - \max_{a \in \mathcal{A}}\; \min_{y \in C(x)} u(a, y)$, and then apply risk controlling methods to control this risk. However, controlling this risk alone is meaningless, as it is always possible to control the risk by outputting trivial sets. Hence, the risk should be controlled combined with coverage guarantees. The only risk controlling framework that additionally allows for a coverage constraint is the work of \citet{cortes2024decision}, which we compare our performance with  in  Section \ref{sec:exp}, and show our superior performance in handling the safety utility trade-off. Furthermore, the defined loss function $l$ for a generic utility function $u$, lacks any (approximate) separability property or sub-modularity, which are essential for algorithmic development of \citet{cortes2024decision}. We, however, work directly with the max-min objective and do not rely on any assumptions. For readers familiar with nested conformal prediction \citep{gupta2022nested}, perhaps another way to elaborate on this important technical difference is to look at Section \ref{Sec:optimalsets}, where in Theorem \ref{strong_duality}, we characterize the optimal prediction sets over the population. It is clear then that the optimal sets do \emph{not} necessarily form a nested sequence of sets as we sweep the miscoverage threshold $\alpha$. This is in contrast to when we want to find optimal sets corresponding to minimum average prediction set size (or any other separable objective). There, the optimal characterization is of the form $p(y|x) \geq q$ (or more generally of the form $s(x,y) \leq q$ for some score function $s$), where $q$ is tuned to satisfy the marginal coverage constraint \citep{lei2013distribution,sadinle2019least,kiyani2024length}. This distinction hints to the sub-optimality of the algorithms that  rely on monotonicity properties of the risk, e.g. thresholding a score function, in obtaining the best risk averse action policies and  safety~guarantee.

\textbf{Robust Optimization.} The max-min policy that we will discuss in Section~\ref{sec:maxmin} also naturally arises at the intersection of uncertainty quantification and robust optimization \citep{patel2024conformal, johnstone2021conformal, chenreddy2024end, li2025data, yeh2024end, cao2024non, wang2023learning, lou2024estimation, patel2024non, elmachtoub2023estimate, lin2024conformal, chan2024conformal, chan2023inverse, chan2020inverse}. 
In robust optimization, decision-making under uncertainty is typically formulated as a minimax problem, where an optimal decision is sought against worst-case realizations within an uncertainty set. Despite a structural resemblance of these works to our framework in that they involve optimization over an uncertainty set,  their scope and objectives have some fundamental differences from ours. We fix any black-box predictive model and any utility function, and in contrast to existing approaches, we \emph{jointly} characterize the optimal notions of uncertainty quantification and action policy. Specifically, we ask: (1) What is the appropriate uncertainty quantification for risk-averse decision makers? We answer that prediction sets are optimal for achieving high-probability utility guarantees. (2) How should these prediction sets be optimally constructed? We provide a distribution-free, finite-sample construction that characterizes the optimal sets. (3) What is the optimal decision policy given these sets? We prove that the max-min rule is provably optimal for risk averse agents. In doing so, our Risk-Averse Calibration (RAC) method offers a principled alternative to uncertainty sets based on heuristic conformity score designs, thereby contributing to the growing intersection of conformal prediction and robust optimization. Additionally, on a more technical note, in Section \ref{Sec:optimalsets}, we show that the optimal prediction sets that lead to optimal safe action policies when used in tandem with the max-min rule do \emph{not} necessarily take the form of thresholding a score function (i.e., $s(x,y) \leq q$ for some score function $s$). There, we characterize an alternative form that, in fact, captures the optimal prediction sets in the context of risk-averse decision-making. That is to say, our results hint to a principled alternative to conventional score-based prediction sets in the pipeline of robust optimization to avoid suboptimality.

\textbf{Further Related Work.} The potential connection of CP ideas to decision making has also been explored in \citet{vovk2018conformal}, from the point of view of conformal predictive distributions. Conformal predictive distributions produce calibrated distributions rather than prediction sets--see e.g. \cite{vovk2017nonparametric, vovk2018cross, vovk2020computationally}. Therefore, they are best to be compared with calibrated forecasts as the methodologies developed in \citet{vovk2018conformal} are also targeting expectation maximizer--i.e. risk neutral-- agents. We, however, focus on risk averse decision making using prediction sets. In particular, we show that prediction sets are a sufficient statistic for risk averse agents that aim to optimize their value at risk. 

Although our primary aim is to develop a general framework to construct prediction sets for  high-stakes decision-making, we note that conformal prediction sets have  been explored in a wide range of specific applications and domains of high-stakes nature. For instance, CP methods have been adapted and used in medical tasks \cite{banerji2023clinical}, power and energy systems \cite{renkema2024conformal}, formal verification and control \cite{lindemann2024formal},  chance-constrained optimization \cite{zhao2024conformal}, and more generally \citet{sun2024conformal, ramalingam2024uncertainty,kiyani2024conformal,straitouri2023improving, vishwakarma2024improving, kiyani2024length, vanderlaan2024selfcalibratingconformalprediction, noorani2024conformalriskminimizationvariance}. Our framework could potentially be extended to these domains, yet each may present additional, domain-specific challenges that lie beyond the scope of this work.

\section{Fundamentals of Risk Averse Decision Making}\label{Sec:fundamentals}
\subsection{Problem Formulation}\label{Sec:problem}
In this section, we will formalize the central objective of a risk averse decision maker.  We are given a  space of features (or covariates, contexts)  $\mathcal{X}$, a set of labels $\mathcal{Y}$, and a set of possible actions (or decisions) $\mathcal{A}$. We assume that the pair $(x,y) \in \mathcal{X} \times \mathcal{Y}$, i.e., the feature and the label, is generated according to a fixed but unknown distribution $\mathcal{D}$. Upon observing $x \in \mathcal{X}$, the decision maker will have to take an action $a \in \mathcal{A}$. Importantly, the decision maker does not observe the true label $y$. However, the utility of the decision maker will depend on both the chosen action $a$ and the label $y$, and is captured by a given utility function $u(\cdot, \cdot): \mathcal{A}\times\mathcal{Y}\rightarrow \R^+$.

\noindent In this paper, we will focus on \emph{risk-averse} decision making; I.e.,  the decision maker would like to choose its actions such that the obtained utility is \emph{guaranteed} to be large enough \emph{with high probability} over the randomness of the label. In other words, risk-averse behavior refers to a preference for actions that minimize the likelihood of low-utility outcomes, even if it means overlooking actions with potentially higher but uncertain utilities. To be more precise, consider a pre-specified risk tolerance threshold $\alpha$. For a given $x \in \mathcal{X}$, from the viewpoint of the risk-averse decision maker each action $a \in \mathcal{A}$ has value equal to: $\nu_\alpha(a; x):= \text{quantile}_{\alpha}[u(a, Y)\mid X = x]\,$\footnote{\(\text{quantile}_\alpha[Z] := \inf\{z \in \mathbb{R} \mid \mathbb{P}(Z \le z) \ge \alpha\}\) for any random variable \(Z\).},
where $Y$ is distributed according to $p(y|x)$. This objective is a standard risk measure and is known in the financial risk literature as the \emph{Value at Risk (VaR)} (see e.g. \cite{duffie1997overview}). The value at risk represents the largest value such that, if action $a$ is taken when facing $x$, then the obtained utility is guaranteed to be at least $v_\alpha(a; x)$ with probability $1-\alpha$. Consequently, the risk averse decision maker should choose the action that has the largest quantile value $v_\alpha(a; x)$, and the resulting best (risk-averse) utility will become: 
\begin{equation} \label{conditional-utility}
\nu_\alpha(x) = \max_{a \in \mathcal{A}} \;\nu_\alpha(a; x) := \max_{a \in \mathcal{A}} \,
\text{quantile}_{\alpha}\left[ u(a,Y) \mid X = x \right], \quad \forall x\in\mathcal{X}.
\end{equation}
The above \emph{risk-averse} utility  should be contrasted with the best \emph{expected} utility $\max_a \mathbb{E}[u(a,Y) | X = x]$. The latter leads to actions that maximize the average utility whereas the former aims to maximize the worst-case utility that can happen with probability $1-\alpha$. Hence the former will be
more risk averse at the cost of becoming more conservative. 

\noindent\textbf{Marginal Version.} The quantity in \eqref{conditional-utility} is a \emph{point-wise} or \emph{conditional} quantity; i.e. to find the best action according to \eqref{conditional-utility} the decision maker requires access to the conditional distribution  $p(y|x)$. In practice, such distributions are unknown and are often intractable when only a finite sample of the distribution is available. An analogous situation arises in conformal prediction (CP), where obtaining fully-conditional coverage guarantees  is known to be impossible from a finite sample of data \cite{pmlr-v25-vovk12, 2019arXiv190304684F}. Consequently, conformal prediction focuses on relaxed marginal (or ``group conditional'', which still marginalize over part of the distribution \cite{bastani2022practical,jung2023batch}) coverage guarantees which are statistically tractable. 

By analogy, we will now introduce the marginal version of \eqref{conditional-utility}. First we rewrite the objective. For a given $x \in \mathcal{X}$,  the value $v_\alpha(x)$ in \eqref{conditional-utility} can be equivalently written as follows
\[
 \begin{aligned}
 & \underset{a \in \mathcal{A}, \nu \in \mathbb{R}}{\text{Maximize}} & & \nu\\
 &  \text{subject to} & & \Pr [u(a, Y) \geq \nu \mid X = x] \geq 1 - \alpha.
 \end{aligned}
 \]
Let us examine the constraint in the above optimization more carefully. We are looking for action-value pairs $(a,\nu)$ such that we are guaranteed with probability at least $1-\alpha$ that, when taking action $a$, the resulting utility is at least $\nu$. Of course, to maximize utility, we should maximize over the choice of the action $a$ and the value $v$ which results in the above optimization.  Now, the risk-averse constraint in the above optimization has the following marginal counterpart: 
\begin{align}
         \Pr [u(a(X), Y) \geq \nu(X)] \geq 1-\alpha,
     \end{align}
 where the function $a(\cdot): \mathcal{X} \to \mathcal{A}$ is a decision-policy that\footnote{In this paper, we focus on deterministic action policies.}  maps features to actions such that it guarantees average  utility according to the function $\nu(\cdot): \mathcal{X} \to \mathbb{R} $ with probability at least $1-\alpha$, marginalized over $X$. Now, rather than optimizing over a single value for $a$ and $\nu$ for each $x$ separately, we jointly optimize over policies $a(\cdot)$ and value functions $\nu(\cdot)$\footnote{Here, note that since $\nu(x)$ is a utility function, its value can not be larger than the maximum achievable utility; i.e. $\nu(x) \leq u_{\rm max}:=\max_a \max_y u(a,y) $ for all $x \in \mathcal{X}$. } which map $\cX$ to actions and values respectively. This results in the following marginal version of the decision maker's optimization problem:
  \begin{mdframed}\label{RA-DPO}
\textbf{Risk Averse Decision Policy Optimization (RA-DPO):}
\[
\begin{aligned}
 \underset{a(\cdot), \,\nu(\cdot)}{\text{maximize}}\quad 
& \E_X\!\Bigl[\nu(X)\Bigr], 
\\[6pt]
\text{subject to}\quad
& \Pr [u(a(X), Y) \geq \nu(X)] \geq 1-\alpha,
\end{aligned}
\]
\end{mdframed}
\begin{remark}
Despite our primary focus on the marginal formulation of risk-averse optimization, the core arguments and methodologies presented in this work naturally extend to the more advanced setting of risk-averse optimization with group conditional validity. Specifically, consider a collection of arbitrary, pre-specified and potentially intersecting groups 
 $g_1, g_2, \cdots, g_m \subseteq \mathcal{X}$. The marginal constraint in RA-DPO can be generalized to a group-conditional form as follows:
$$
\Pr [u(a(X), Y) \geq \nu(X)\,\mid\,X\in g_i] \geq 1-\alpha, \quad \forall\; i \in [1, \cdots, m].
$$
These types of conditional constraints are known as group-conditional validity in the conformal prediction literature (see \cite{jung2021moment,bastani2022practical,jung2023batch,gibbs2023conformal}). This extension allows for a more fine-grained control over risk across different subpopulations or scenarios, which is crucial in applications where group-specific guarantees are important. Notably, all the theoretical results developed in \cref{Sec:equivalent} and \cref{sec:optimal_set} can be systematically generalized to accommodate these group-conditional constraints.
\end{remark}

\subsection{A Prediction Set Perspective}\label{sec:maxmin}
Recall that in our setting the (feature, label) pair is generated according to a distribution. The decision maker only observes the feature $x$ based on which it will choose its action $a$. However, the realized utility will depend on both the action  $a$ and the label $y$. The decision maker does not observe the label, but we assume that it has access to a predictor that provides predictions about the label $y$ given the input feature $x$. More specifically, we assume that the predictor will provide \emph{prediction sets} of the form $C(x) \subseteq \mathcal{Y}$, $x \in \mathcal{X}$, that are guaranteed to contain the true label with high probability. We assume that the prediction sets satisfy the \emph{marginal} coverage guarantee that is standard in conformal prediction, i.e., 
\begin{align}\label{marginal}
    \Pr_{(X, Y)}[Y \in C(X)] \geq 1 - \alpha.
\end{align}
Given this framework, two immediate questions arise: (i) Assuming the only information that the decision maker has about the true label is through the prediction sets, how should it choose its actions to maximize (risk-averse) utility? (ii) How should the prediction sets be designed to  not only be marginally valid according to \eqref{marginal} but also maximize the utility achieved by the decision maker?

We will  proceed with answering question (i) now, and will provide an answer to question (ii) in the subsequent sections. 
Assuming that the decision maker can only take actions based on the prediction sets -- i.e. it has no other information about the label distributions - then its optimal decision rule takes a simple and natural form. It will have to play the action $a$ that maximizes their utility $u(a,y)$ in the worst case over labels $y \in C(x)$. We denote this optimal risk-averse (RA) decision rule by $a_{\rm RA}: 2^\mathcal{Y} \to \mathcal{A}$, and the corresponding utility by $\nu_{\rm RA}: 2^\mathcal{Y} \to \mathbb{R}$:  
\begin{align}\label{maxmin_def}
a_{\rm RA}\bigl(C(x)\bigr) = \text{arg}\max_{a\in\mathcal{A}}\min_{y\in C(x)} u(a, y), \quad \nu_{\rm RA}\bigl(C(x)\bigr) = \max_{a\in\mathcal{A}}\min_{y\in C(x)} u(a, y).
\end{align}
We will show that this decision rule is minimax optimal over the set of all distributions that are consistent with the marginal guarantee \eqref{marginal}. Assume that the decision maker is given access to a set function $C: \mathcal{X}\rightarrow \{2^\mathcal{Y}\}$. Let us also define $\Omega$ as the set of all the  data distributions that are consistent with the marginal guarantee; i.e. the set of all distributions $p(x, y)$ over $(\mathcal{X}, \mathcal{Y})$ such that,
$\Pr_{(X, Y)\sim p(x, y)}[Y \in C(X)] \geq 1 - \alpha.$ Let $\pi(\cdot) : 2^\mathcal{Y}\rightarrow\mathcal{A}$ be a policy that takes as input the prediction set $C(x)$ and outputs an action. Aligned with RA-DPO, the value of policy $\pi$ with respect to a joint distribution $p(x,y)$ for the decision maker can then  be defined as:
\begin{align*}
    \nu^*(\pi, p) =  \;\underset{\nu(.)}{\text{Maximize}}\quad  &\underset{X\sim p(x)}{\E}\!\Bigl[\nu(X)\Bigr], 
\\[6pt]
\text{subject to}\quad
& \Pr_{X, Y \sim p(x, y)} [u(\pi (C(X)), Y) \geq \nu(X)] \geq 1-\alpha,
\end{align*}
We are now interested in the policy that is minimax optimal meaning that it can perform well with respect to the worst case distribution in $\Omega$. That is to say we want to find the policy $\pi^*$ that is the answer to:
\begin{align} \label{minimax}
    \underset{\pi}{\text{Maximize}}\;\underset{p \in \Omega}{\text{Minimize}} \quad\nu^*(\pi, p).
\end{align}
\begin{propo} \label{maxmin_policy}
    Assume $\alpha < 0.5$ and let $\pi^*(x)$ be the optimal solution to \eqref{minimax}. Then we have,
    \begin{align}
        \pi^*(x) = \arg\max_{a\in\mathcal{A}}\min_{y\in C(x)} u(a, y).
    \end{align}
\end{propo}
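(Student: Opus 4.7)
The plan is to prove the proposition by matching lower and upper bounds on the minimax value of~\eqref{minimax}, both equal to $\inf_{x\in\mathcal{X}}\tilde{m}(x)$, where $\tilde{m}(x) := \max_{a\in\mathcal{A}}\min_{y\in C(x)}u(a,y)$ is the value of the candidate max--min policy. For convenience I also write $m_\pi(x) := \min_{y\in C(x)}u(\pi(C(x)),y)$, so that $m_{\pi^*}\equiv\tilde{m}$ and $m_\pi\le\tilde{m}$ pointwise for every policy $\pi$.

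For the lower bound on $\min_{p\in\Omega}\nu^*(\pi^*,p)$, I would exploit the marginal coverage of an arbitrary $p\in\Omega$: on the event $\{Y\in C(X)\}$, which has probability at least $1-\alpha$ by definition of $\Omega$, the realized utility satisfies $u(\pi^*(C(X)),Y)\ge \min_{y\in C(X)}u(\pi^*(C(X)),y) = \tilde{m}(X)$ by the very definition of $\pi^*$. Hence $\nu(X) := \tilde{m}(X)$ is feasible in the inner maximization, giving $\nu^*(\pi^*,p)\ge \E_{X\sim p_X}[\tilde{m}(X)]$. Taking the infimum over $p\in\Omega$---whose $X$-marginal is unrestricted---the adversary concentrates on $\arg\min_x \tilde{m}(x)$, yielding $\min_{p\in\Omega}\nu^*(\pi^*,p)\ge \inf_x \tilde{m}(x)$.

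For the matching upper bound on $\min_{p\in\Omega}\nu^*(\pi,p)$ with arbitrary $\pi$, I construct a single-atom adversary: for each $x\in\mathcal{X}$, let $y^*\in\arg\min_{y\in C(x)}u(\pi(C(x)),y)$ and take $p_x$ to be the point mass at $(x,y^*)$. Then $p_x\in\Omega$ (coverage equals $1$), the utility $u(\pi(C(X)),Y) = m_\pi(x)$ is deterministic, and the constraint $\Pr[u(\pi(C(X)),Y)\ge\nu(X)]\ge 1-\alpha$ forces $\nu(x)\le m_\pi(x)$, so $\nu^*(\pi,p_x)=m_\pi(x)$. Minimizing over $x$ yields $\min_{p\in\Omega}\nu^*(\pi,p)\le \inf_x m_\pi(x)\le \inf_x\tilde{m}(x)$; combining with the lower bound shows that $\pi^*$ attains the minimax value $\inf_x \tilde{m}(x)$, and any policy tying this value must satisfy $m_\pi\equiv\tilde{m}$, i.e.\ must coincide with the max--min rule.

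The hardest part will be verifying that the single-atom adversary above is truly optimal for the adversary---that no $p\in\Omega$ with support on multiple $x$-values strictly beats it. The concern is that spreading mass lets the decision maker raise $\nu$ to $u_{\max}$ on a small $\alpha$-fraction of the marginal by spending the coverage slack, potentially pushing the objective above $\inf_x m_\pi(x)$. The hypothesis $\alpha<0.5$ enters precisely here: for any two-atom split with weight $\beta\in(\alpha,1-\alpha)$, neither atom can accommodate an inflated certificate without violating feasibility, and the best attainable value is a convex combination of $m_\pi$-values, which is at least $\min_x m_\pi(x)$. A short case analysis across the three regimes of $\beta$, extended to arbitrary support by an analogous argument, confirms that concentration on $\arg\min_x m_\pi(x)$ is indeed the adversary's best response, closing the loop.
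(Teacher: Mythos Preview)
Your lower- and upper-bound arguments (paragraphs two and three) are correct and are exactly the paper's proof: the paper also builds a point-mass adversary at $(x, y^*)$ with $y^* \in \arg\min_{y\in C(x)} u(\pi(C(x)), y)$ for the upper bound, and uses feasibility of $\nu(X) = \tilde{m}(X)$ under marginal coverage for the lower bound. Once you have $\min_p \nu^*(\pi^*,p)\ge\inf_x\tilde m(x)$ and, for every $\pi$, $\min_p\nu^*(\pi,p)\le\inf_x m_\pi(x)\le\inf_x\tilde m(x)$, the sandwich is closed and $\pi^*$ is minimax-optimal. The proposition is proved at that point.

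Your final ``hardest part'' paragraph is therefore unnecessary, and the concern it raises runs in the wrong direction. You worry that spreading mass might let the decision maker push $\nu^*(\pi,p)$ \emph{above} $\inf_x m_\pi(x)$---but the adversary is \emph{minimizing} $\nu^*$, so a $p$ yielding a higher value is simply never chosen. Your point-mass construction already delivers the inequality $\min_{p\in\Omega}\nu^*(\pi,p)\le\inf_x m_\pi(x)$, which is all the upper half of the sandwich requires; there is nothing further to verify, and the paper's proof does not touch this issue. (If you did want the reverse inequality $\min_p\nu^*(\pi,p)\ge\inf_x m_\pi(x)$, observe that $\nu(X)=m_\pi(X)$ is feasible for every $p\in\Omega$ by the same coverage argument as in your lower bound, giving $\nu^*(\pi,p)\ge\E_p[m_\pi(X)]\ge\inf_x m_\pi(x)$ with no case analysis and no use of $\alpha<0.5$.)

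Two smaller points. Your uniqueness claim (``any policy tying this value must satisfy $m_\pi\equiv\tilde m$'') does not follow from $\inf_x m_\pi(x)=\inf_x\tilde m(x)$ together with $m_\pi\le\tilde m$ pointwise---that only forces equality at an infimizing $x$, not globally---but the proposition asserts optimality, not uniqueness, so this overreach is harmless. And neither your core argument nor the paper's actually uses the hypothesis $\alpha<0.5$; only $\alpha<1$ is needed so that the $\{0,1\}$-valued point-mass constraint forces $\nu(x)\le m_\pi(x)$.
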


\noindent To summarize,  Proposition \ref{maxmin_policy} states that when the risk averse decision maker wants to make the decision based on  prediction sets $C(x)$, $x\in \mathcal{X}$, that contain the actual label with high probability, there is a simple, yet minimax optimal policy, $a_{\rm RA}\bigl(C(x)\bigr)$ that guarantees the minimum utility of $\nu_{\rm RA}\bigl(C(x)\bigr)$ with high probability. To this end, we turn our focus on how the prediction sets should be designed such that they would be the most useful for the decision maker among all the prediction sets that provide valid marginal guarantee. This is what we study in the next section.

\subsection{An Equivalent Formulation Through Prediction Sets }\label{Sec:equivalent}
In the previous section we argued that if the decision maker's knowledge about the label is only based on the prediction sets, then the (minimax) optimal policy $a_{\rm RA}$ and its associated value $\nu_{\rm RA}$ are given in \eqref{maxmin_def}. Hence, assuming that the decision maker is playing $a_{\rm RA}$, the prediction sets $C(x)$ should be designed to maximize the resulting utility of the decision maker while ensuring marginal coverage; I.e., the prediction sets $C(x)$ should be designed according to the following optimization:
\begin{mdframed}\label{Secondary}
\textbf{Risk Averse Conformal Prediction Optimization (RA-CPO):}
\[
\begin{aligned}
& \underset{C(.)}{\text{Maximize}} & & \E_X \biggl[\nu_{\rm RA}(C, X)\biggr] := \E_X \left[\max_{a\in\mathcal{A}}\min_{y\in C(X)} u(a, y)\right] \\
&  \text{subject to} & & \Pr [Y \in C(X)] \geq 1 - \alpha.
\end{aligned}
\]
\end{mdframed}
One might expect that the result of RA-CPO, i.e. optimizing the utility using prediction sets, would lead to a lower utility compared to the original optimization RA-DPO. This is because: (i) The policy given in \eqref{maxmin_def} is a specific policy designed to be valid even for the worst-case distribution for which the prediction sets are marginally valid (see Proposition \ref{maxmin_policy}). Hence, this policy could be overly conservative; (ii) In RA-DPO the optimal action and value functions are obtained assuming full information about the data distribution, whereas in RA-CPO we require that information must be filtered through a (properly designed) prediction set representation. One might expect a-priori that passing from the actual distribution to a lossy prediction set representation would discard information that is critical to finding the optimal policy. However, the following theorem shows, perhaps surprisingly, that  this is not the case; \emph{the optimal action policy for any distribution can be represented as a max-min rule over a prediction set.}

\begin{theorem}\label{prediction_set_equivalence}
    RA-DPO and RA-CPO are equivalent. In other words, from any optimal solution of RA-DPO, denoted by $(a^*(x), \nu^*(x))$, we can construct an optimal solution $C^*(x)$ to RA-CPO with the same utility, i.e., $\E_X \left[\nu_{\rm RA}\bigl(C^*(X)\bigr)\right] = \E_X\left[\nu^*(X)\right].$ Also, from any optimal solution of RA-CPO we can construct an optimal solution for RA-DPO with the same utility. 
\end{theorem}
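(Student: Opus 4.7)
The plan is to prove equivalence by exhibiting two explicit constructions that move between feasible points of the two problems and preserve the objective value (up to showing the right inequalities, which force equality at the optima).

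First, I would go from RA-DPO to RA-CPO. Given an optimal pair $(a^*(x), \nu^*(x))$ for RA-DPO, define the candidate prediction set
\[
C^*(x) \;=\; \{\, y \in \mathcal{Y} \;:\; u(a^*(x), y) \,\geq\, \nu^*(x)\,\}.
\]
Feasibility is immediate: $\Pr[Y \in C^*(X)] = \Pr[u(a^*(X),Y) \geq \nu^*(X)] \geq 1-\alpha$ by the RA-DPO constraint. For the objective, note that the specific action $a^*(x)$ is one feasible choice inside the inner $\max$ of $\nu_{\rm RA}$, so
\[
\nu_{\rm RA}(C^*(x)) \;=\; \max_{a\in\mathcal{A}}\min_{y\in C^*(x)} u(a,y) \;\geq\; \min_{y\in C^*(x)} u(a^*(x), y) \;\geq\; \nu^*(x),
\]
where the last inequality is by construction of $C^*(x)$. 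Taking expectations yields $\E_X[\nu_{\rm RA}(C^*(X))] \geq \E_X[\nu^*(X)]$, so RA-CPO's optimum is at least RA-DPO's optimum.

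Next, I would go from RA-CPO to RA-DPO. Given an optimal $C^*(\cdot)$ for RA-CPO, let
\[
a^*(x) \;=\; \arg\max_{a\in\mathcal{A}}\min_{y\in C^*(x)} u(a,y), \qquad \nu^*(x) \;=\; \nu_{\rm RA}(C^*(x)).
\]
For any $x$, the event $\{Y \in C^*(x)\}$ implies $u(a^*(x), Y) \geq \min_{y\in C^*(x)} u(a^*(x), y) = \nu^*(x)$. Hence
\[
\Pr[u(a^*(X), Y) \geq \nu^*(X)] \;\geq\; \Pr[Y \in C^*(X)] \;\geq\; 1-\alpha,
\]
so $(a^*, \nu^*)$ is feasible for RA-DPO with objective $\E_X[\nu^*(X)] = \E_X[\nu_{\rm RA}(C^*(X))]$. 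This shows RA-DPO's optimum is at least RA-CPO's optimum. Combining the two inequalities gives equality of the optimal values, and the constructions in each direction now produce optimal solutions whose objectives match exactly (any strict gap in the first construction would contradict the equality of optima).

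I do not anticipate a serious technical obstacle: the argument is essentially a bijective correspondence between ``action + utility threshold'' pairs and their superlevel sets. The only points that require mild care are (i) ensuring the selector $a^*(x)$ (and an associated measurable $\nu^*(x)$) can be chosen measurably so that $\E_X[\nu^*(X)]$ and $\Pr[\cdots]$ are well-defined—standard measurable selection suffices here since $\mathcal{A}$ and $\mathcal{Y}$ are the same ambient spaces used throughout the paper; and (ii) noting that the max over $a$ is not needed to achieve feasibility in the RA-DPO$\to$RA-CPO direction, which is exactly what makes the superlevel-set construction work even though one might initially fear that replacing $a^*(x)$ by a worst-case min over $C^*(x)$ would lose utility.
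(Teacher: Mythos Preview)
Your argument is correct and follows the paper's proof essentially line for line: the superlevel-set construction $C^*(x)=\{y:u(a^*(x),y)\ge \nu^*(x)\}$ in the RA-DPO$\to$RA-CPO direction, and the max--min policy with $\nu^*(x)=\nu_{\rm RA}(C^*(x))$ in the reverse direction. The one technical wrinkle the paper handles explicitly and you skip is the possibility that $C^*(x)$ is empty for some $x$ (this can occur whenever $\nu^*(x)>\max_{y}u(a^*(x),y)$, since the RA-DPO constraint is only marginal); the paper patches such points by setting $\tilde C(x)=\{y_{\max}\}$, which preserves coverage and, using the standing bound $\nu^*(x)\le u_{\max}$, still yields $\nu_{\rm RA}(\tilde C(x))=u_{\max}\ge \nu^*(x)$.
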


\noindent\textbf{Implications.} Prediction sets are a fundamental object in risk averse decision making. In particular, the optimal strategy of a risk averse decision maker can be formulated as playing a max-min strategy over an optimized prediction set. To fully characterize such optimal policies, the first step
is to derive the optimal solution to RA-CPO. This is the subject of the study in the next section.


\section{The Optimal Prediction Sets} \label{Sec:optimalsets}
In this section, we characterize the optimal solution (i.e., prediction sets) for RA-CPO~\ref{Secondary} in terms of the conditional distribution \(p(y \mid x)\). Before that, let us summarize what we have done so far and give an overview of the steps we are going to take in this section.  We proved in the previous section  that RA-DPO \eqref{RA-DPO} and RA-CPO \eqref{Secondary} are equivalent. To solve the RA-CPO we will first introduce a reparametrization as in \eqref{opt_reparametrization}, which we then solve using techniques from duality theory. 

\noindent We now focus on three fundamental objects (denoted by bold symbols) that describe the optimal sets:
\begin{enumerate}
    \item A function \(\btheta(x,t)\in \mathbb{R}\), which represents the optimal (risk-averse) utility achievable under a conditional coverage assignment \(t \in [0,1]\) for a covariate $x \in \mathcal{X}$.
    \item A function \(\ba(x,t)\in \mathcal{A}\), which gives the action that attains the corresponding risk-averse utility level.
    \item A function \(t^*(x) \in [0,1]\) that optimally assigns conditional coverage probabilities for \(x \in \mathcal{X}\), ensuring a total coverage of \(1-\alpha\). Moreover, as we will show, \(t^*(x)\) can be characterized via a single scalar \(\beta\) and a     function \(\bg(x,\beta)\), which is derived from the conditional distribution and the utility values. 
\end{enumerate}



\noindent We begin by introducing the main notion that we use to relate optimal utility to coverage. We define the functions $\btheta: \mathcal{X} \times [0,1] \to \mathbb{R}$ and $\ba: \mathcal{X} \times [0,1] \to \mathcal{A}$ as follows: 
\begin{align} \label{theta}
     \btheta(x, t) = \max_{a \in \mathcal{A}}\text{quantile}_{1-t}\left[u(a,Y) \mid X = x\right], \quad
     \ba(x, t) = \arg\max_{a \in \mathcal{A}}\text{quantile}_{1-t}\left[u(a,Y) \mid X = x \right]
\end{align}

\begin{figure}[t]
\centering
\includegraphics[width=0.8\textwidth]{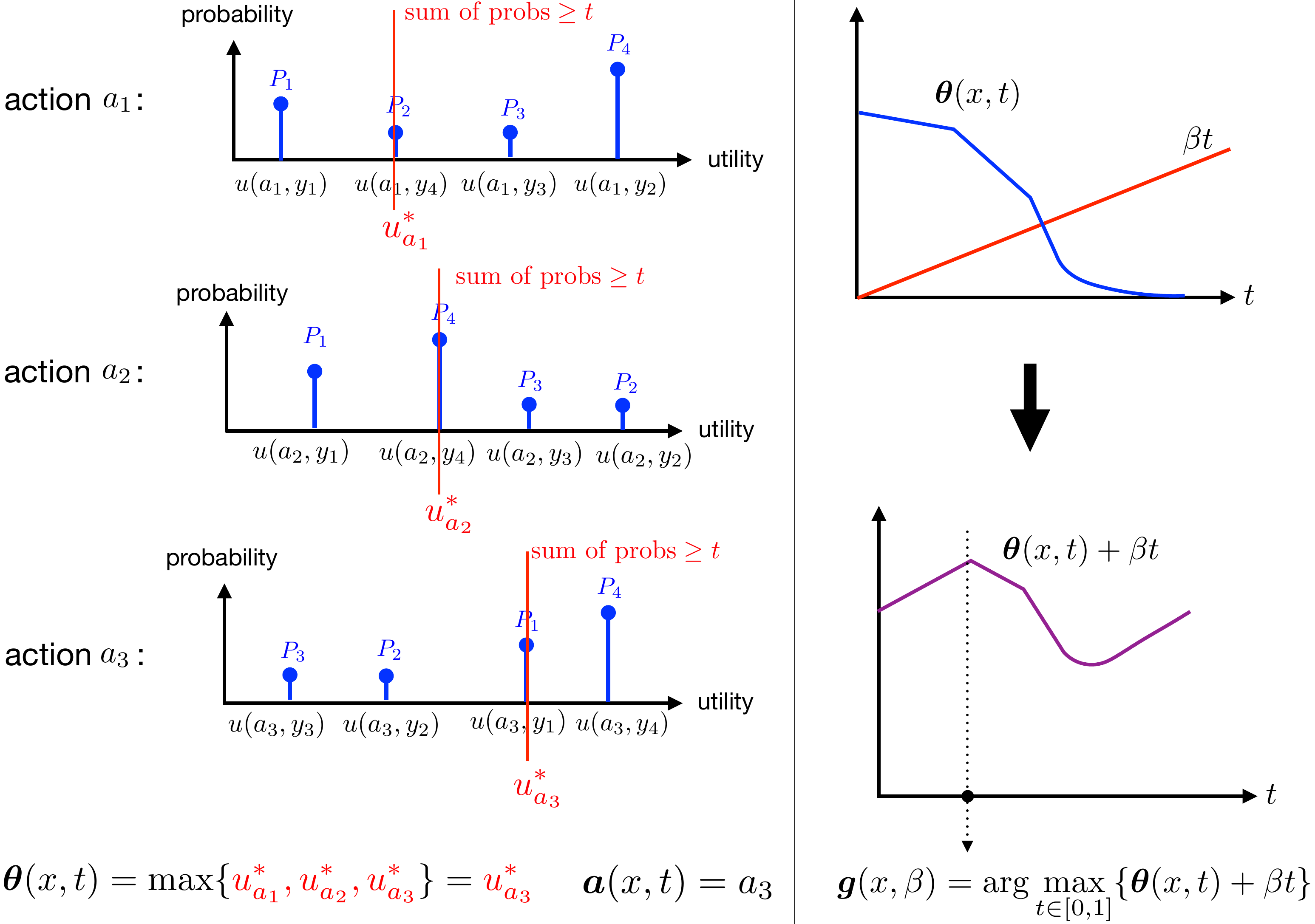}
\caption{\emph{Left}: Illustration of how the functions $\btheta$ and $\ba$ are computed for a given $x \in \mathcal{X}$ and $t \in [0,1]$. Here, we have three actions $\mathcal{A} = \{a_1, a_2, a_3\}$ and four labels $\mathcal{Y} = \{y_1, y_2, y_3, y_4\}$. We also let $P_i := p(y_i|x)$ denote the conditional probabilities. For each of the actions $a_j$, $j=1,2,3$, the value $u^*_{a_j}$ is the $(1-t)$-quantile of the random variable $u(a_j, Y)$. The value $\btheta(x,t)$ corresponds to the maximum of these quantiles among the actions, and $\ba(x,t)$ corresponds to the maximizing action.  \emph{Right}: Illustration of how the function $\bg(x, \beta)$ is obtained from $\btheta(x,t)$ for a given $x$.}
\label{fig:illustrate_theta}
\end{figure}

\noindent In words, given a feature $x \in \mathcal{X}$ and a probability coverage value $t \in [0,1]$, $\btheta(x,t)$ is computed as follows (see also Figure~\ref{fig:illustrate_theta}): For each action $a$, we first find the $(1-t)$-quantile of the random variable $u(a,Y)$ with $Y$ being distributed according to $p(y|x)$. This quantile value is the largest utility achievable with probability at least $t$ when we take action $a$. By maximizing such $(1-t)$-quantiles over the choice of the action $a$ we obtain $\btheta(x,t)$, and the maximizing action is denoted by $\ba(x,t)$.

\noindent Let us now explain how the function $\btheta(x,t)$ can play a role in finding an optimal solution for RA-CPO \eqref{Secondary}. Fix an instance $x$, and assume that we would like to assign coverage probability $t$ to $x$. For the specific instance $x$, we would like to construct a prediction set $C(x)$ with coverage at least $t$, i.e. $\text{Pr}(Y \in C(x)) \geq t$, where the probability is over the conditional distribution $p(y|x)$. We ask: How should $C(x)$ be designed to maximize the objective of RA-CPO? The following proposition provides the answer.

\begin{propo}
    \label{optimal_prediction_set}
    Fix an instance $x \in \mathcal{X}$ and a coverage value $t \in [0,1]$. Then, among all the sets $C \subseteq \mathcal{Y}$ that have coverage at least $t$, i.e. ${\rm{Pr}}(Y \in C | X=x) \geq t$, the following set  has the largest risk-averse utility value $\nu_{\rm RA}(C) = \max_{a \in \mathcal{A}} \min_{y \in C} u(a,y)$:
        \begin{align} \label{C(x,t)}
        C(x,t) = \biggl\{
        y\in\mathcal{Y}:\,\,u(\ba(x,t), y)  \geq 
        \btheta(x, t)
        \biggr\},
    \end{align}
    Further,  we have $\nu_{\rm RA}(C(x,t)) = \btheta(x,t)$.

 \label{sec:optimal_set}
\end{propo}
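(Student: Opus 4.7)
The plan is to prove the proposition by establishing matching lower and upper bounds on the risk-averse utility, each equal to $\btheta(x,t)$: the lower bound is attained by the candidate set $C(x,t)$ itself, while the upper bound applies to every feasible set.

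First I would verify feasibility. Since $\btheta(x,t)$ is the $(1-t)$-quantile of $u(\ba(x,t), Y)$ conditioned on $X=x$, the standard property of the quantile (obtained by approaching it from below along $z \uparrow \btheta(x,t)$) gives $\Pr(u(\ba(x,t), Y) \geq \btheta(x,t) \mid X=x) \geq t$, so $\Pr(Y \in C(x,t) \mid X=x) \geq t$ and $C(x,t)$ lies in the feasible region. For the lower bound on $\nu_{\rm RA}(C(x,t))$, I would specialize the outer maximum to $a' = \ba(x,t)$:
\[
\nu_{\rm RA}(C(x,t)) \;=\; \max_{a'} \min_{y \in C(x,t)} u(a', y) \;\geq\; \min_{y \in C(x,t)} u(\ba(x,t), y) \;\geq\; \btheta(x,t),
\]
where the final inequality is immediate from the definition $C(x,t) = \{y : u(\ba(x,t), y) \geq \btheta(x,t)\}$.

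The substantive step is the matching upper bound: for every feasible $C$ with $\Pr(Y \in C \mid X=x) \geq t$, one has $\nu_{\rm RA}(C) \leq \btheta(x,t)$. Fix such a $C$ and any action $a$, and set $m := \min_{y \in C} u(a, y)$. Then $C \subseteq \{y : u(a, y) \geq m\}$, so
\[
\Pr(u(a, Y) \geq m \mid X=x) \;\geq\; \Pr(Y \in C \mid X=x) \;\geq\; t.
\]
By the definition of the $(1-t)$-quantile, this tail condition forces $m \leq \text{quantile}_{1-t}[u(a, Y) \mid X=x]$. Maximizing over $a \in \mathcal{A}$ then yields
\[
\nu_{\rm RA}(C) \;=\; \max_a \min_{y \in C} u(a,y) \;\leq\; \max_a \text{quantile}_{1-t}[u(a, Y) \mid X=x] \;=\; \btheta(x,t).
\]
Combining the two bounds simultaneously proves the optimality of $C(x,t)$ and the identity $\nu_{\rm RA}(C(x,t)) = \btheta(x,t)$.

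The step I expect to require the most care is the quantile implication used in the upper bound: translating the tail bound $\Pr(u(a, Y) \geq m) \geq t$ into the inequality $m \leq \text{quantile}_{1-t}[u(a, Y) \mid X=x]$ relies on the one-sided conventions inherent to the infimum-based quantile definition, and needs a small additional argument at atoms of the conditional distribution (it is completely clean, for instance, under continuity of $u(a, Y) \mid X=x$). Everything else is a direct unpacking of the definitions of $\btheta$, $\ba$, $C(x,t)$, and $\nu_{\rm RA}$.
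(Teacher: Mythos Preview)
Your proposal is correct and follows essentially the same two-step argument as the paper: an upper bound $\nu_{\rm RA}(C)\le\btheta(x,t)$ for every feasible $C$ via the quantile inequality, and a matching lower bound by specializing to the action $\ba(x,t)$ on $C(x,t)$. Your explicit caveat about the quantile implication at atoms is a nice addition of rigor over the paper's one-line justification.
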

\noindent Given the above proposition, the optimal set for RA-CPO \ref{Secondary} can be obtained  based on the following re-parametrization in terms of the coverage probabilities that we assign to each $x \in \mathcal{X}$. In order to satisfy the marginal constraint of RA-CPO, we will need to assign to each $x$, a coverage value $t(x)$ such that $\mathbb{E}_X[t(X)] \geq 1-\alpha$. From the above proposition, we know that if an instance $x$ is assigned with $t$ units of (probability) coverage, then it can add the maximum amount of $\btheta(x, t)$ to the objective and its corresponding prediction set, which is optimal given $t$ units of coverage assigned to $x$, is given in \eqref{C(x,t)}.  
Hence, to find the optimal prediction sets we will need to find the assignment $t(x)$ which optimally distributes the $(1-\alpha)$ units of probability over the feature space $\mathcal{X}$, such that the expected utility is optimized.  Formally, RA-CPO reduces to choosing a function \(t:\mathcal{X}\to[0,1]\) that satisfies the total coverage \(\E_X[t(X)] \ge 1-\alpha\) and maximizes \(\E_X[\btheta(X,t(X))]\). This step is captured by the following equivalent reformulation of RA-CPO:
\begin{equation} \label{opt_reparametrization}
\begin{aligned}
& \underset{t: \mathcal{X} \to [0,1]}{\text{maximize}} & & \E_X \bigl[\btheta(X, t(X))\bigr] \\
& \text{subject to:} & & \E_X\bigl[t(X)\bigr] \,\ge\, 1-\alpha.
\end{aligned}
\end{equation}

\noindent 
Once the optimal solution $t^*(x)$ to the above re-parametrization of RA-CPO is found, then the optimal policy/actions, denoted by $a^*(x) = \ba(x,t^*(x))$,  are derived according to \eqref{theta}, and the optimal prediction set is given by:
    \begin{align} \label{C^*}
        C^*(x) = \biggl\{
        y\in\mathcal{Y}:\,\,u(a^*(x), y)  \geq 
        \btheta(x, t^*(x))
        \biggr\}.
    \end{align}
\noindent Using tools from duality theory, we can show that the above optimization problem admits a solution with a simple ``one-dimensional'' structure in terms of scalar parameter $\beta \in \mathbb{R}$ and an assignment function  $\bg: \mathcal{X} \times \mathbb{R} \to [0,1]$ defined as
\footnote{For simplicity, we assume in this section that the maximizer of $\theta(x,s) + \beta s$ is unique with probability $1$ for any $\beta > 0$. In Appendix~\ref{app:proof_duality}, we will derive a general perturbation-based rule when the maximizer is not unique.}
    \begin{align}\label{g-beta}
        \bg(x, \beta) = \arg\max_{s \in [0,1]} \bigl\{\btheta(x,s) +  \beta s  \bigr\}.
    \end{align}
\noindent An illustration of the function $\bg(x,\beta)$ is provided in Figure~\ref{fig:illustrate_theta}. One can observe that $\bg(x,\cdot)$ is connected to the convex-conjugate transform of the function $\btheta(x, \cdot)$. 

\begin{theorem}\label{strong_duality}
    Assume that the marginal distribution of $X$, $\mathcal{P}_X$, is continuous. Then, there exists a $\beta^* \geq 0$ such that the function
    \begin{align*}
       t^*(x) = \bg(x,\beta^*)
    \end{align*}
    is an optimal solution of \eqref{opt_reparametrization}. Consequently,  the optimal prediction sets for RA-CPO \eqref{Secondary} are obtained using $t^*(x)$ from \eqref{C^*}. Further, the value of $\beta^*$ is a solution to the following equation in terms of the scalar $\beta$: $\mathbb{E}_X[\bg(X,\beta)] = 1 - \alpha$.  
\end{theorem}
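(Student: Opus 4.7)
The plan is to apply Lagrangian duality to the infinite-dimensional program~\eqref{opt_reparametrization}, exploiting that the scalar constraint $\E_X[t(X)] \ge 1-\alpha$ decouples the problem pointwise once a multiplier $\beta \ge 0$ is fixed. For $\beta \ge 0$ form the Lagrangian
\[
L(t, \beta) \;=\; \E_X\bigl[\btheta(X, t(X)) + \beta\, t(X)\bigr] \;-\; \beta(1-\alpha).
\]
Any primal-feasible $t$ satisfies $L(t,\beta) \ge \E_X[\btheta(X, t(X))]$, so $\sup_t L(t,\beta)$ is a weak-duality upper bound on the primal optimum for every $\beta \ge 0$. Since $L$ is separable in $t(x)$ across $x$, the supremum is attained by pointwise maximization: $\sup_{s \in [0,1]}\{\btheta(x,s) + \beta s\}$ is exactly the value at the argmax $\bg(x,\beta)$, giving $\sup_t L(t,\beta) = \E_X[\btheta(X,\bg(X,\beta)) + \beta\,\bg(X,\beta)] - \beta(1-\alpha)$.

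To close the gap I would exhibit $\beta^* \ge 0$ and a measurable selection $t^*(x) \in \arg\max_{s \in [0,1]}\{\btheta(x,s) + \beta^* s\}$ with $\E_X[t^*(X)] = 1-\alpha$. Such a $t^*$ is primal feasible with the coverage constraint tight, maximizes $L(\cdot, \beta^*)$ pointwise, and satisfies complementary slackness, so a standard saddle-point argument yields primal optimality. Existence of $\beta^*$ rests on the behavior of $h(\beta) := \E_X[\bg(X,\beta)]$: because $\btheta(x,\cdot)$ is non-increasing in $t$ (the $(1-t)$-quantile decreases as $t$ grows), $\bg(x, 0) = 0$ and hence $h(0) = 0$; and as $\beta \to \infty$ the linear term dominates the bounded function $\btheta(x,\cdot)$, pushing the maximizer toward $s = 1$, so $h(\beta) \to 1$. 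Moreover, for each fixed $x$ the argmax of $\btheta(x,s) + \beta s$ is monotone non-decreasing in $\beta$ (a standard envelope/monotonicity observation), so $h$ is non-decreasing.

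The main obstacle is that $\btheta(x,\cdot)$ is generally not concave --- indeed, it is piecewise constant when $\mathcal{Y}$ is finite (cf.\ Figure~\ref{fig:illustrate_theta}) --- so the argmax defining $\bg(x,\beta)$ is only a correspondence and $h$ may jump at critical slopes, preventing a naive intermediate-value argument from hitting $1-\alpha$ exactly. The continuity assumption on $\mathcal{P}_X$ resolves this in the style of a Neyman--Pearson argument: let $\underline{g}(x,\beta)$ and $\overline{g}(x,\beta)$ denote the smallest and largest elements of $\arg\max_{s \in [0,1]}\{\btheta(x,s) + \beta s\}$, and define $\underline{h}(\beta) := \E_X[\underline{g}(X,\beta)]$, $\overline{h}(\beta) := \E_X[\overline{g}(X,\beta)]$; both are non-decreasing with the same limits $0$ and $1$. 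Choose $\beta^*$ with $\underline{h}(\beta^*) \le 1-\alpha \le \overline{h}(\beta^*)$. Because $\mathcal{P}_X$ has no atoms, the tying set $T := \{x : \underline{g}(x,\beta^*) < \overline{g}(x,\beta^*)\}$ can be measurably partitioned into two pieces whose masses are calibrated so that setting $t^*$ to $\overline{g}(x,\beta^*)$ on one piece, $\underline{g}(x,\beta^*)$ on the other, and to the common argmax value outside $T$ produces $\E_X[t^*(X)] = 1-\alpha$ exactly --- non-atomicity is precisely what guarantees such a split exists for any target mass in $[\underline{h}(\beta^*), \overline{h}(\beta^*)]$. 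With $(\beta^*, t^*)$ in hand, substituting $t^*$ into Proposition~\ref{optimal_prediction_set} and \eqref{C^*} delivers the claimed optimal prediction sets, completing the argument.
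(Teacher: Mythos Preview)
Your proposal is correct and reaches the same conclusion as the paper, but the route differs in two notable ways.

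\textbf{Where the Lagrangian is applied.} The paper does not dualize \eqref{opt_reparametrization} directly. Instead it first lifts $t(\cdot)$ to the indicator field $\rho(x,t)=\1[t\le t(x)]$, rewrites the objective as $\int \rho(x,t)\,\btheta'(x,t)\,dt$, relaxes the integer constraint $\rho\in\{0,1\}$ to $\rho\in[0,1]$, and only then invokes infinite-dimensional LP duality (citing Luenberger). The Lagrangian is solved pointwise via an auxiliary lemma about maximizing $\int_0^1 \rho(t)\theta'(t)\,dt$ over non-increasing $\rho$, and the paper checks that the relaxed optimum is integer-valued so there is no relaxation gap. Your argument skips the lift entirely: you dualize the single scalar constraint directly, note that $L(t,\beta)$ separates in $x$, and then close the duality gap not by appealing to convex duality (which would fail since $\btheta(x,\cdot)$ is not concave) but by exhibiting a primal-feasible $t^*$ that attains the dual bound. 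This saddle-point construction is valid regardless of convexity and is more economical; what the paper's lift buys is that the relaxed problem is genuinely linear, so strong duality comes for free from standard theory rather than from a bespoke construction.

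\textbf{Tie-breaking.} When the pointwise argmax is non-unique the paper handles ties by perturbing $p(y\mid x)$ with independent noise of size $\epsilon$, arguing that ties then occur with probability zero, and sending $\epsilon\to 0$ with an $O(\epsilon)$ utility loss. Your Neyman--Pearson style argument---selecting between the smallest and largest maximizers on a measurable subset of the tying region whose mass is tuned via non-atomicity of $\mathcal{P}_X$---is the more standard device and avoids the limiting procedure. One point you leave implicit but which is needed: the existence of $\beta^*$ with $\underline{h}(\beta^*)\le 1-\alpha\le \overline{h}(\beta^*)$ requires showing that the interval-valued map $\beta\mapsto[\underline{h}(\beta),\overline{h}(\beta)]$ has no gaps. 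This follows because for $\beta_1<\beta_2$ one has $\overline{g}(x,\beta_1)\le \underline{g}(x,\beta_2)$ (a two-line supermodularity computation) together with upper-hemicontinuity of the argmax correspondence, giving $\overline{h}(\beta^-)=\underline{h}(\beta)$; you should state this explicitly.
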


\noindent The main implication of the above theorem is that it provides a simple characterization of the optimal sets given access to the data distribution: (i) Find the scalar $\beta^*$ that satisfies $\mathbb{E}_X[\bg(X,\beta^*)] = 1 - \alpha$; (ii) For each $x \in \mathcal{X}$ compute $t^*(x):=\bg(x, \beta^*)$ from \eqref{g-beta}; (iii) The optimal prediction set for $x$, $C^*(x)$, is then given by \eqref{C^*}.

\noindent
As we will see in the next section, the one-dimensional characterization via \(\beta\) is particularly convenient when we only have access to approximate conditional probabilities. By replacing $p(y| x)$ with a suitable approximation in all the definitions in this section, we can still apply the construction in Theorem~\ref{strong_duality}—namely, search for a  $\beta$ that ensures the corresponding prediction sets maintain valid coverage. This simple scalar calibration then yields prediction sets whose risk-averse utility are improved (and eventually becomes optimal) as the quality of the approximated probabilities improves.

\section{The Main Algorithm: Risk Averse Calibration (RAC)}\label{sec:finite}
So far, we have shown that RA-DPO \eqref{RA-DPO}, the primary problem that a risk averse agent cares about, is equivalent to RA-CPO \eqref{Secondary}, an optimization problem over prediction sets. In Proposition \ref{sec:optimal_set} and Theorem \ref{strong_duality}, we derived the structure of the optimal prediction sets for the RA-CPO problem. These sets are defined by the following  functions: (i) $\btheta(x,t)$ given in \cref{theta}, which fundamentally relates coverage to utility; (ii) $\ba(x,t)$ given in \cref{theta},  which is the corresponding action that provides the desired utility; and (iii) the assignment function $\bg(x,\beta)$ introduced in \cref{g-beta}. These quantities are defined based on the true conditional distribution which is often unknown in practice.

In this section, we consider the finite-sample setting in which we assume access to a set of calibration samples $\{(X_i, Y_i)\}_{i=1}^n$, as well as a predictive model, $f: \mathcal{X}\rightarrow \Delta_\mathcal{Y}$, which assigns to each $x \in \mathcal{X}$ a $|\mathcal{Y}|$-dimensional probability vector. Here, the output of $f$ for an input $x$, denoted by $f_x$, should be thought of as the approximate probabilities that a forecaster provides for the labels given the input $x$. 
For example, $f_x$ can be the (softmax) output of a pre-trained model that predicts label $y$ for the input $x$. We use the notation $f_x(y)$ to denote the probability assigned to label $y$ for the input $x$. Now, we aim to develop a finite sample algorithm that connects predictions to actions that can exploit any black-box pre-trained predictive model.

Using the model $f$, we will estimate the functions $\btheta$, $\ba$, and $\bg$, defined in \eqref{theta} and \eqref{g-beta}, by substituting the true conditional probabilities with their estimated counterparts obtained via $f$. Concretely,
\begin{align} \label{theta_hat}
     \hat{\btheta}(x, t) = \max_{a \in \mathcal{A}}\text{quantile}_{1-t}\left[u(a,Y) \mid Y \sim f_x\right], \quad
     \hat{\ba}(x, t) = \arg\max_{a \in \mathcal{A}}\text{quantile}_{1-t}\left[u(a,Y) \mid Y \sim f_x \right]
\end{align}
and
\begin{align}\label{g_hat}
    \hat{\bg}(x, \beta) = \arg\max_{s \in [0,1]} \left\{ \hbtheta(x,s) +  \beta s  \right\}.
\end{align}
From the result of Theorem~\ref{strong_duality} we know that the optimal prediction sets admit a ``one-dimensional'' structure in terms of the scalar parameter $\beta \in \mathbb{R}$, and the optimal coverage assignment is derived using the function $\bg(x, \beta)$. Hence, to simplify notation, we analogously define
\begin{align*}
\hat{\btheta}(x, \beta) \;:=\; \hat{\btheta}\!\bigl(x,\,\hat{\bg}(x, \beta)\bigr), \quad \hat{\ba}(x,\beta) \;:=\; \hat{\ba}\!\bigl(x,\,\hat{\bg}(x, \beta)\bigr).
\end{align*}
Following \eqref{C^*}, the prediction sets take the form
\begin{align*}
  \hat{C}(x; \beta) = \biggl\{
        y\in\mathcal{Y}:\,\,u\left(\hat{\ba}(x,\beta), y \right)  \geq 
        \hat{\btheta}(x, \beta)
        \biggr\}.
\end{align*}
All of the defined functions, $\hat{\btheta}, \hat{\ba}, \hat{\bg},$ and $\hat{C}$ are easily computable as they are parametrized by a one-dimensional parameter $\beta$. We can now present Risk Averse Calibration (RAC) in algorithm \ref{alg}, which gives a simple finite sample method to calibrate the parameter $\beta$ with distribution free guarantees.

\begin{algorithm}[t!]
\caption{Risk Averse Calibration (RAC)}\label{alg}
\begin{algorithmic}[1]
\State \textbf{Input:} Miscoverage level $\alpha$, Calibration samples $\{(X_i, Y_i)\}_{i=1}^n$, Test covariate $X_{\rm test}$.
\State \textbf{for each $y \in \mathcal{Y}$:} 
\[
\begin{aligned}
\hat{\beta}_y = \underset{{\beta\in \mathbb{R}}}{\text{argmin}}\, \beta \quad\text{subject to:}\quad \frac{1}{n+1}\bigl\{\sum_{i=1}^n \textbf{}[Y_i \in \hat{C}(X_i; \beta)] + \mathbf{1}[y \in \hat{C}(X_{\rm test}; \beta)]\bigr\} \ge 1-\alpha.
\end{aligned}
\]
\State \textbf{Output:}
\[
\begin{aligned}
C_{\rm RAC}(X_{\rm test}) = \bigl\{y \in \mathcal{Y}\,\mid\, y\in \hat{C}(X_{\rm test}; \hat{\beta_y})\bigr\}.
\end{aligned}
\]
\end{algorithmic}
\end{algorithm}


\begin{theorem}\label{coverage_theorem}
    Assume that the calibration samples $\{(X_i, Y_i)\}_{i=1}^n$ and $(X_{\rm test}, Y_{\rm test})$ are exchangeable. Then, we have
    \begin{align*}
        \Pr \big[Y_{\rm test}\in C_{\rm RAC}(X_{\rm test})\big] \ge 1 - \alpha,
    \end{align*}
over the randomness of the test and calibration data.
\end{theorem}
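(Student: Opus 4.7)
The plan is to deploy the standard full-conformal exchangeability argument, using the crucial observation that evaluating the algorithm's inner optimization at the special value $y = Y_{\rm test}$ turns $\hat\beta_{Y_{\rm test}}$ into a symmetric function of all $n+1$ data pairs.

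First I would augment the index set to $[n+1]$ by setting $(X_{n+1},Y_{n+1}) := (X_{\rm test}, Y_{\rm test})$ and defining the total miss-count
\[
M(\beta) \;=\; \sum_{i=1}^{n+1} \mathbf{1}\bigl[Y_i \notin \hat{C}(X_i;\beta)\bigr].
\]
At $y=Y_{\rm test}$ the constraint inside the algorithm becomes exactly $M(\beta) \le (n+1)-\lceil (n+1)(1-\alpha)\rceil = \lfloor(n+1)\alpha\rfloor$. Hence
\[
\hat\beta_{Y_{\rm test}} \;=\; \inf\bigl\{\beta \;:\; M(\beta) \le \lfloor(n+1)\alpha\rfloor\bigr\},
\]
a quantity defined purely through the symmetric statistic $M(\cdot)$ and therefore invariant under any permutation of the $n+1$ pairs. (To keep the infimum well-behaved, I would break ties on the finite label set by a fixed lexicographic rule; this does not affect coverage and sidesteps the fact that the sets $\hat C(x;\beta)$ need not be nested in $\beta$, as discussed in Section 3.)

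Next I would set $\beta^\star := \hat\beta_{Y_{\rm test}}$, so by construction $M(\beta^\star) \le \lfloor(n+1)\alpha\rfloor$. Define the indicator vector
\[
E_i \;=\; \mathbf{1}\bigl[Y_i \notin \hat{C}(X_i;\beta^\star)\bigr], \qquad i=1,\dots,n+1.
\]
Because $\beta^\star$ is a symmetric function of the $n+1$ pairs, permuting the pairs permutes the $E_i$'s in the same way; combined with exchangeability of the pairs themselves, this shows that $(E_1,\dots,E_{n+1})$ is an exchangeable collection of $\{0,1\}$-valued random variables with $\sum_i E_i = M(\beta^\star) \le \lfloor(n+1)\alpha\rfloor$ almost surely. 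Exchangeability then gives $\Pr[E_i=1]$ the same value for every $i$, so
\[
\Pr[E_{n+1}=1] \;=\; \frac{1}{n+1}\,\mathbb{E}\Bigl[\textstyle\sum_{i=1}^{n+1} E_i\Bigr] \;\le\; \frac{\lfloor(n+1)\alpha\rfloor}{n+1} \;\le\; \alpha.
\]
By the definition of the algorithm, $Y_{\rm test}\in C_{\rm RAC}(X_{\rm test})$ iff $Y_{\rm test}\in \hat{C}(X_{\rm test};\hat\beta_{Y_{\rm test}})$, i.e.\ iff $E_{n+1}=0$, so the bound above is exactly the desired coverage guarantee.

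The main obstacle is the symmetry step: one must be careful that $\hat\beta_{Y_{\rm test}}$ really is a symmetric function of the $n+1$ pairs and that the coverage indicator $E_{n+1}$ inherits exchangeability. Since the sets $\hat C(x;\beta)$ are not monotone in $\beta$, $M(\beta)$ can be non-monotone and the infimum defining $\hat\beta_{Y_{\rm test}}$ need not be attained; a fixed deterministic tie-breaking rule (together with a mild right-continuity argument, or replacing the infimum by the smallest $\beta$ on a refined grid that still achieves the bound) resolves this cleanly. Everything else is the standard exchangeability-to-uniform-rank calculation.
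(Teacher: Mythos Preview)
Your argument is correct and follows essentially the same route as the paper: both observe that at $y=Y_{\rm test}$ the threshold $\hat\beta_{Y_{\rm test}}$ is a symmetric function of the $n+1$ pairs, then use exchangeability to equate $\Pr[Y_{\rm test}\in\hat C(X_{\rm test};\hat\beta_{Y_{\rm test}})]$ with the average coverage fraction, which is at least $1-\alpha$ by the defining constraint of $\hat\beta_{Y_{\rm test}}$. Your write-up is more explicit about the floor/ceiling arithmetic and about the non-monotonicity and infimum-attainment issues (which the paper's proof passes over), but the underlying exchangeability-plus-symmetry mechanism is identical.
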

\noindent Put it differently, Theorem \ref{coverage_theorem} states that the prediction sets constructed by RAC have the so-called property of distribution-free coverage guarantee. 

\noindent  Recalling the definitions \eqref{maxmin_def}, we can now state the following corollary.
\begin{corollary}\label{criteria_1_cor}
    Assume that the calibration samples $\{(X_i, Y_i)\}_{i=1}^n$ and $(X_{\rm test}, Y_{\rm test})$ are exchangeable. We then have
    \begin{align*}
        \Pr \big[u\bigl(a_{\rm RA}\left(C_{\rm RAC}(X_{\rm test})\right),\,Y_{\rm test})\ge \nu_{\rm RA}(C_{\rm RAC}(X_{\rm test})\bigr)\big] \ge 1 - \alpha.
    \end{align*}
\end{corollary}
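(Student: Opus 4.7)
The plan is to reduce the corollary to Theorem~\ref{coverage_theorem} by showing that, deterministically, the coverage event implies the utility event. Concretely, I would argue that on the event $\{Y_{\rm test}\in C_{\rm RAC}(X_{\rm test})\}$, the inequality $u(a_{\rm RA}(C_{\rm RAC}(X_{\rm test})),Y_{\rm test})\ge \nu_{\rm RA}(C_{\rm RAC}(X_{\rm test}))$ holds pointwise, and then monotonicity of probability combined with the coverage guarantee yields the claim.

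For the pointwise step, I would unpack the definitions in \eqref{maxmin_def}. By construction, $a_{\rm RA}(C)$ is the action achieving $\max_{a\in\mathcal{A}}\min_{y\in C} u(a,y)$, and $\nu_{\rm RA}(C)$ is precisely the value of this max-min. Hence, for any $y\in C$,
\begin{equation*}
u\bigl(a_{\rm RA}(C),\,y\bigr)\;\ge\;\min_{y'\in C} u\bigl(a_{\rm RA}(C),\,y'\bigr)\;=\;\nu_{\rm RA}(C).
\end{equation*}
Instantiating this with $C=C_{\rm RAC}(X_{\rm test})$ and $y=Y_{\rm test}$ shows that the coverage event is a subset of the utility event. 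Applying this inclusion together with Theorem~\ref{coverage_theorem} then gives the $1-\alpha$ lower bound claimed in the corollary.

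The only non-trivial issue is ensuring the objects are well-defined when $C_{\rm RAC}(X_{\rm test})$ is empty (so that the inner $\min$ is vacuous) or when the argmax is not a singleton. For the former, whenever $Y_{\rm test}\in C_{\rm RAC}(X_{\rm test})$ the set is automatically nonempty, so emptiness only affects a measure-zero branch of the complementary event and does not weaken the bound; one can adopt any convention (e.g.\ $\nu_{\rm RA}(\emptyset)=-\infty$ or any default action) without changing the probability in question. For the latter, picking any measurable selector for $a_{\rm RA}$ suffices, since the inequality above holds for every element of the argmax. I do not expect any substantive obstacle — the corollary is essentially a definitional rewriting of the coverage statement through the max-min decision rule.
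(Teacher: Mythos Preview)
Your proposal is correct and matches the paper's intended argument: the paper does not give a separate proof of this corollary but presents it as an immediate consequence of Theorem~\ref{coverage_theorem} together with the definitions in \eqref{maxmin_def}, which is exactly the reduction you describe. Your handling of the edge cases (empty sets, non-unique argmax) is more explicit than anything in the paper and is fine.
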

Putting the pieces together, Corollary~\ref{criteria_1_cor} ensures that a simple max-min decision policy over RAC-constructed prediction sets provides a pair of \emph{action policy} and \emph{utility certificate}, namely $a_{\rm RA}(C_{\rm RAC}(X_{\rm test}))$ and $\nu_{\rm RA}(C_{\rm RAC}(X_{\rm test}))$, satisfying a distribution free safety guarantee according to \eqref{safety}. Moreover, Theorem~\ref{strong_duality} highlights RAC’s practical relevance in terms of exploiting the predictive model. Specifically, RAC’s utility performance depends on the quality of the predictive model $f$: if $f$ closely estimates the true conditional probabilities, then the model-based definitions in \eqref{theta_hat} and \eqref{g_hat} approximate their true counterparts in \eqref{theta} and \eqref{g-beta}, ensuring that RAC-informed decisions align closely with the optimal ones, as guaranteed by Theorem~\ref{strong_duality}.

\section{Experiments}\label{sec:exp}

In this section, given a pre-trained model, \( f(\cdot) \), which assigns probability \( f_x(y) \) to input-label pair \( (x, y) \), we compare RAC with two groups of baselines:

\noindent \textbf{Calibration + Best-Response.} We calibrate the model on the calibration data using a strengthened version of \emph{decision calibration} \cite{zhao2021calibrating}, specifically the variant from \cite{noarov2023high}, which provides \emph{swap regret} bounds. We then apply the \emph{best-response} policy:
$
\text{best-response}(x) \;=\; \arg\max_{a \in \mathcal{A}} \;\E_{y \sim f_x(y)} \bigl[u(a, y)\bigr].
$ 
The primary purpose of implementing this baseline is to highlight the consequences of fully trusting the predictive model. While we expect the best-response policy over a calibrated model to achieve higher average utility at test time (see the Section \ref{intro}), it is likely to make more frequent critical mistakes compared to our method.

\noindent \textbf{Conformal Prediction + Max-Min.} We construct \((1-\alpha)\)-valid prediction sets using split conformal prediction with three different scoring rules. The decision policy then applies the max-min rule from Section~\ref{Sec:fundamentals}:  
$
a_{\rm RA}(C(x)) \;=\; \arg\max_{a \in \mathcal{A}} \; \min_{y \in C(x)} \; u(a, y),
$
which we proved is the optimal strategy when deciding based on prediction sets in Section~\ref{Sec:fundamentals}. The three scores are: 

\begin{itemize}
    \item \textbf{score-1} \cite{sadinle2019least}: \(1 - f_x(y)\),
    
    \item \textbf{score-2} \cite{romano2020classification}: 
    $\underset{y':\, f_{x}(y') > f_x(y)}{\sum f_{x}(y')}$,
    
    \item \textbf{score-3} \cite{cortes2024decision}: a greedy scoring rule tailored to the max-min policy.
   
\end{itemize}
By varying \(\alpha\), we can control the degree of conservativeness, trading off average utility against the avoidance of catastrophic errors.  We compare in terms of safety and~utility using the following metrics:

\begin{itemize}
    \item 
    \textbf{(a) Average realized max-min value}: The test-time mean of the worst-case utility across the prediction sets (i.e., the average of \(\nu_{\rm RA}\) in \eqref{maxmin_def}). 
    \item 
    \textbf{(b) Fraction of critical mistakes}: For samples with a critical ground-truth label, we report the fraction of cases in which each method chooses the \emph{worst} action in test-time.
    \item 
    \textbf{(c) Average realized utility}: The empirical mean of the realized utilities across all test samples.
    \item 
    \textbf{(d) Realized miscoverage}: The fraction of test samples for which the true label is not in the prediction~set.
\end{itemize}

\begin{figure*}[t]
    \centering
    \begin{subfigure}[t]{0.49\textwidth}
        \centering
        \includegraphics[width=\linewidth]{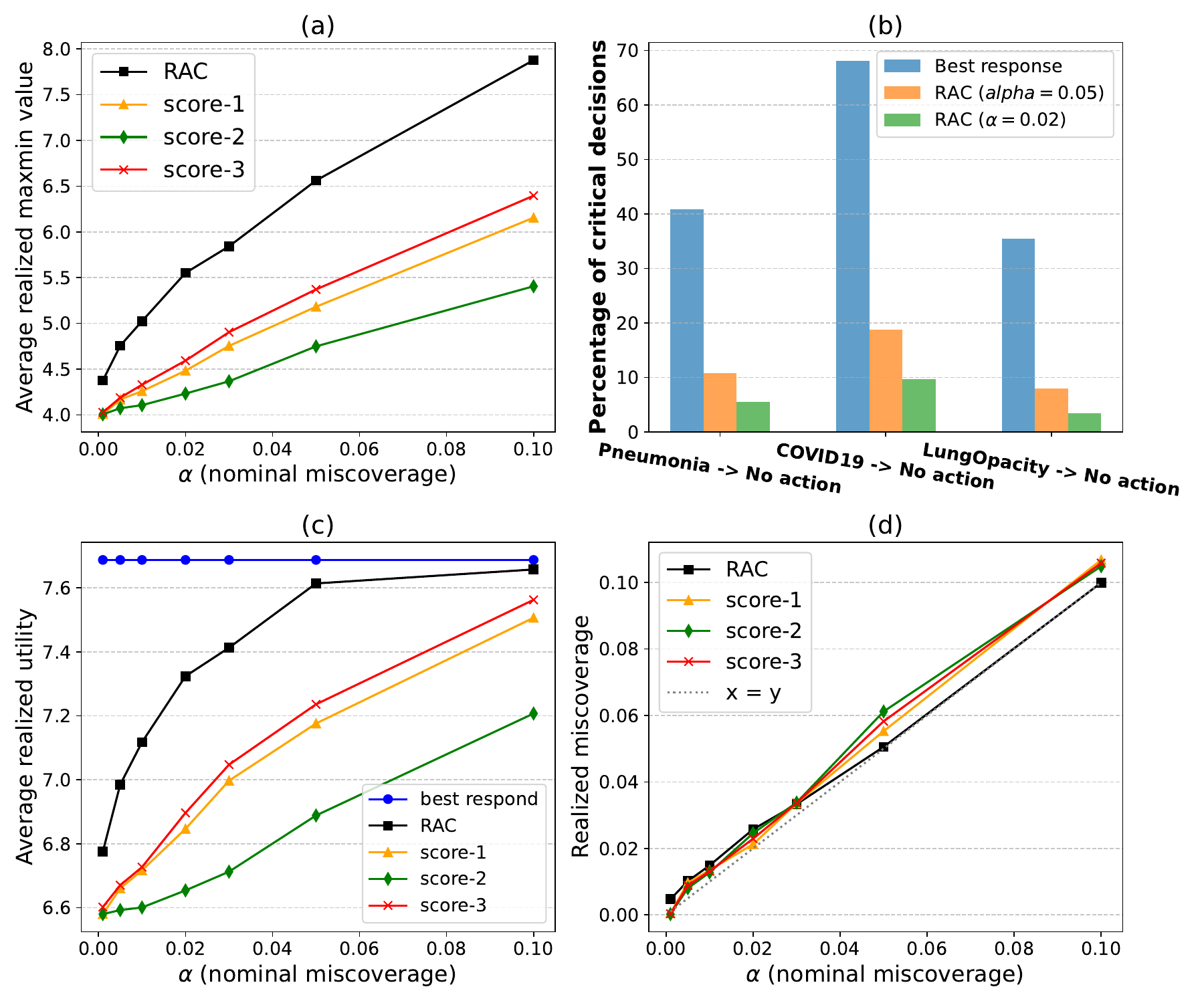}
        \caption*{\textbf{Medical Diagnosis Experiment}}
    \end{subfigure}
    \hfill
    \begin{subfigure}[t]{0.49\textwidth}
        \centering
        \includegraphics[width=\linewidth]{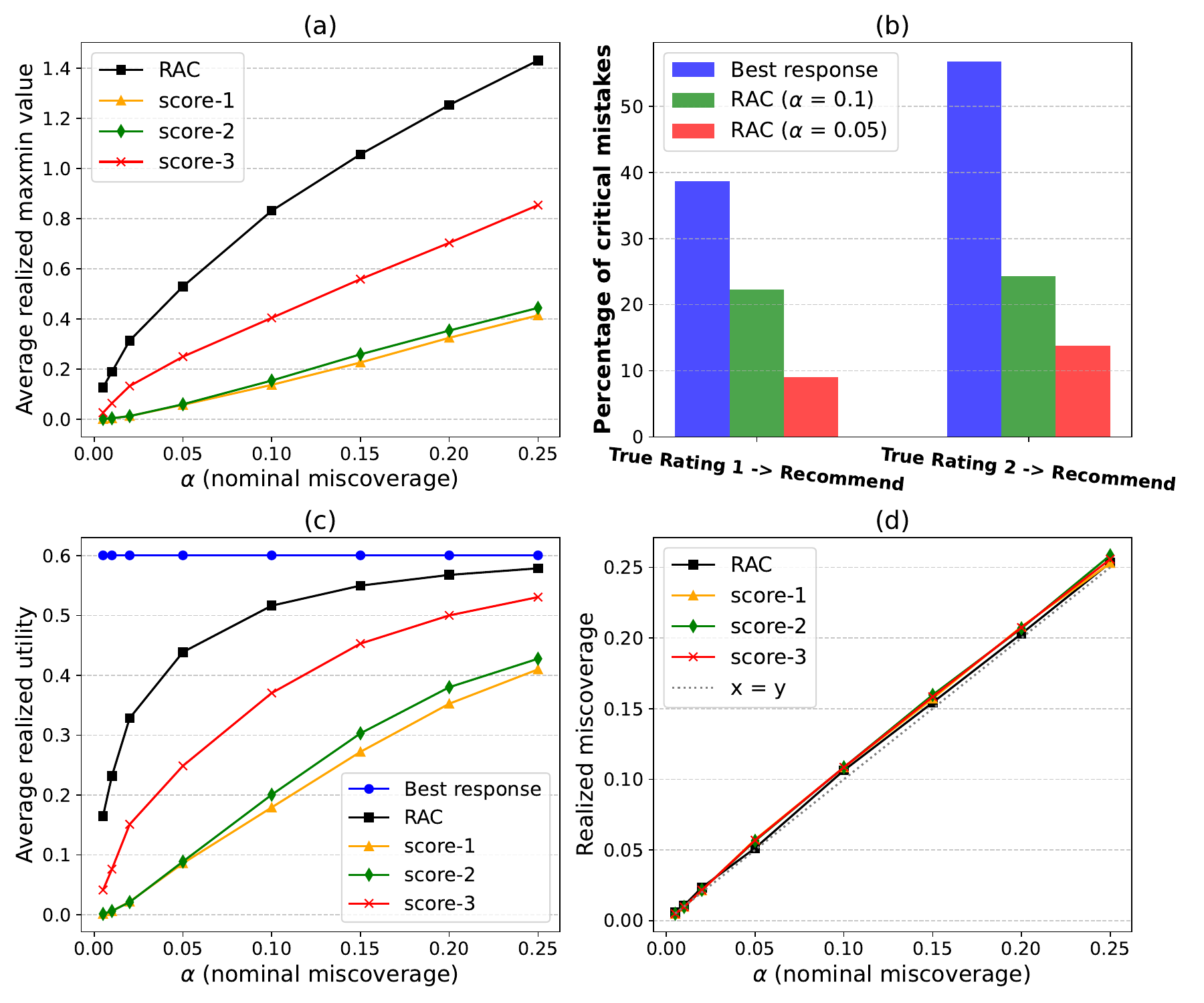}
        \caption*{\textbf{MovieLens Recommendation Experiment}}
    \end{subfigure}
    \caption{Results from two experiments. (a) Average realized max-min value as a function of $\alpha$. (b) Fraction of wrong critical decisions: in medical diagnosis, severe omission of appropriate care (e.g., failing to act on pneumonia or COVID-19 cases); in MovieLens, the percentage of movies rated 1 or 2 that were incorrectly recommended. (c) Average realized utility. (d) Realized miscoverage.}
    \label{fig:resultsplots_combined}
\end{figure*}



\subsection{Medical Diagnosis}\label{exp:med}
In this experiment, we explore decision making in medical diagnosis and treatment as a \textbf{risk-sensitive} application. Specifically, we use the \textit{COVID-19 Radiography Database}~\cite{chowdhury2020can, rahman2021exploring}, which contains chest X-ray images of four categories: \emph{Normal}, \emph{Pneumonia}, \emph{COVID-19}, and \emph{Lung Opacity}. We randomly split the data into \textbf{training} (70\%), \textbf{calibration} (10\%), and \textbf{test} (20\%) sets. In this experiment, we use the \texttt{Inception\_v3} architecture \cite{szegedy2015going, szegedy2016rethinking}, developed by google, a convolutional neural network known for its inception modules that employ multiple filter sizes in parallel. We initialize the model with ImageNet-pretrained weights and then \emph{fine-tune} it on our dataset, meaning we retrain the higher layers while preserving much of the earlier-layer feature representations. 

To capture clinical priorities, we employ the \textbf{utility matrix} in Table~\ref{tab:utility}, which maps each true condition (row) to a set of actions (column). Although we use the specific matrix below, our setup can accommodate any alternative choice of the utilities. (Further details on the \emph{AI-assisted} construction appear in the Appendix \ref{utility_AI}) All the baselines then will be calibrated to connect model's predictions to these four actions.

\begin{table}[h]
\centering
\begin{tabular}{lcccc}
\toprule
\textbf{True Label} & \textbf{No Action} & \textbf{Antibiotics} & \textbf{Quarantine} & \textbf{Additional Testing} \\
\midrule
Normal (0)        & 10 & 2 & 2 & 4 \\
Pneumonia (1)     & 0  & 10 & 3 & 7 \\
COVID-19 (2)      & 0  & 3  & 10 & 8 \\
Lung Opacity (3)  & 1  & 4  & 4  & 10 \\
\bottomrule
\end{tabular}
\caption{Utility matrix for the four-class chest X-ray task. Rows denote true conditions, columns represent actions.}
\label{tab:utility}
\end{table}

After training, we vary the nominal miscoverage parameter \(\alpha\) during calibration to study its impact on performance. As shown in Figure~\ref{fig:resultsplots_combined}(a), our method achieves the best trade-off curve among baselines, providing higher worst-case utilities for every nominal \(\alpha\). Equivalently, it offers stronger \emph{utility certificates} at each high-probability threshold. In Figure~\ref{fig:resultsplots_combined}(c), it also consistently outperforms other prediction set-based methods in terms of \emph{average utility}. 

As we expected, the best-response method over a calibrated model attains the highest overall average utility, however, Figure~\ref{fig:resultsplots_combined}(b) highlights its susceptibility to critical mistakes. For example, in COVID-19 cases, best-response chooses \emph{no action} over 60\% of the time, recommending a wrong treatment on a large fraction of patients with COVID-19. Our risk-averse policy (RAC) drive this error rate below 10\% (at \(\alpha=0.02\)), incurring only a modest (under 5\%) drop in average utility. Finally, Figure~\ref{fig:resultsplots_combined}(d) confirms that all prediction-set-based baselines achieve their target miscoverage levels, ensuring the associated high-probability utility guarantees remain statistically~valid.
\subsection{Recommender Systems}
We next consider a risk-sensitive recommendation scenario using the \textit{MovieLens} dataset. Each data point is a user--movie pair \(\bigl(x=(\text{user features},\text{movie features}), y\bigr)\), where the label \(y \in \{1,2,3,4,5\}\) is the user's rating. We split the data into \textbf{training} (80\%), \textbf{calibration} (10\%), and \textbf{test} (10\%), and train a neural network classifier \(f\) (details in the Appendix) to estimate the probability distribution \(f_y(x)\).

At test time, the policy must decide whether to \emph{recommend} or \emph{not recommend} a movie. We use the  \textbf{utility function} in Table~\ref{tab:utility_movie}: if a movie with true rating \(y\) is recommended, the utility is \(y - 3\), while not recommending yields~0. 

\begin{table}[h]
\centering
\small
\begin{tabular}{lccccc}
\toprule
\textbf{Action} & 1 & 2 & 3 & 4 & 5\\
\midrule
Not Rec & 0 & 0 & 0 & 0 & 0 \\
Rec     & -2 & -1 & 0 & +1 & +2 \\
\bottomrule
\end{tabular}
\caption{Utility matrix for the MovieLens recommendation task.}

\label{tab:utility_movie}
\end{table}

We vary the nominal miscoverage \(\alpha\) during calibration and measure performance on test data. As shown in Figure~\ref{fig:resultsplots_combined}(a), our method achieves the best trade-off among baselines, offering stronger \emph{utility certificates} (worst-case utility) at all \(\alpha\) levels. Figure~\ref{fig:resultsplots_combined}(c) also shows that our approach outperforms other CP-based methods in \emph{average utility}.

Although the best-response method achieves the highest overall average utility, Figure~\ref{fig:resultsplots_combined}(b) reveals its vulnerability to ``critical mistakes''---frequently \emph{recommending} movies rated 1 or 2. Such failures can undermine user trust and harm companies' policy in keeping their customers. In contrast,  RAC (\(\alpha=0.05\)) cuts these critical errors by  75\%,
while incurring only a (15\%) reduction in average utility.

\section{Discussion and Future Work}
In this paper, we establish the decision-theoretic foundations of conformal prediction, demonstrating that valid prediction sets serve as a sufficient statistic for risk-averse agents seeking to optimize their value at risk. Building upon this framework, we developed an algorithmic interface to connect the predictions of any black-box predictive model to actions with marginal, distribution-free safety guarantees. 

\noindent While this paper primarily focuses on marginal safety guarantees, we recognize that in many practical scenarios, marginal guarantees alone may not suffice. Specifically, three types of conditional safety guarantees are often desirable: group-conditional safety (i.e., safety conditioned on certain characteristics of the covariate $x$), label-conditional safety (i.e., safety conditioned on the true label $y$), and action-conditional safety (i.e., safety conditioned on the action $a$ taken by the decision maker). Although we leave these aspects as avenues for future exploration, we believe that the majority of our findings can be systematically extended to these more complex cases.

\noindent Furthermore, while in many applications it is feasible to define a utility function that captures the preferences of the decision maker, in some contexts this may prove challenging. In such cases, we could  rely on  estimation of the utility  or  express preferences in a relative manner—indicating that one action is preferred over another in a given context. These considerations point to two promising directions for future work: first, examining the robustness of Risk-Averse Calibration under utility mis-specification, and second, developing algorithmic frameworks that require only access to preference functions, rather than explicit quantitative utility functions. Alternately, we could explore uncertainty quantification that is simultaneously useful for many downstream decision-makers with different utility functions. This approach does not require knowledge of the utility function of the specific downstream decision maker that we are interested in.  \citet{noarov2023high,roth2024forecasting} show that this is possible for expectation maximizing decision makers using refinements of (decision) calibration --- is the same possible for risk averse decision makers?

\section{Acknowledgments}
This work was supported  by the NSF Institute for CORE Emerging Methods in Data Science (EnCORE) and NSF grant FAI-2147212.
The authors wish to thank John Cherian, Natalie Collina, and Bruce D. Lee for helpful discussions. 




{\small
\setlength{\bibsep}{0.2pt plus 0.3ex}
\bibliographystyle{apalike}
\bibliography{references}
}

\newpage
\appendix
\section{Proofs}

\subsection{Proof of Proposition~\ref{maxmin_policy}} 
We prove that the risk-averse decision rule
\[
    a_{\rm RA}\bigl(C(x)\bigr) 
    \;:=\;
    \arg\max_{a \in \mathcal{A}}\; \min_{y \in C(x)} u(a, y)
\]
solves the minimax problem in \eqref{minimax}. 

\paragraph{Part 1: Upper bound for any arbitrary policy.}
Let $\pi(\cdot): 2^\mathcal{Y}\to\mathcal{A}$ be any policy, and let $C(\cdot)$ be a fixed set function satisfying 
\[
    \Pr_{(X, Y)\sim\mathcal{P}}\bigl[Y \in C(X)\bigr] \;\ge\; 1-\alpha.
\]
We construct a ``worst-case'' distribution in $\Omega$ for $\pi$. 

Pick any $x\in \mathcal{X}$ for which $C(x)\neq\emptyset$. Define a distribution $p^\ast(x,y)$ by
\[
    p^\ast(X = x) \;=\; 1, 
    \quad
    p^\ast(Y = y \mid X = x) \;=\; 
    \begin{cases}
      1 & \text{for some }y \in \arg\min_{z \in C(x)} u\bigl(\pi(C(x)), z\bigr),\\
      0 & \text{otherwise.}
    \end{cases}
\]
Under $p^\ast$, we have $Y \in C(X)$ almost surely (since $C(x)$ is nonempty and we place all mass on a label in $C(x)$). Hence $p^\ast \in \Omega$ because the marginal coverage constraint 
\[
    \Pr_{(X,Y)\sim p^\ast}[Y \in C(X)]
    =
    1 \;\ge\; 1-\alpha
\]
is satisfied. But under this distribution, the utility of $\pi(C(x))$ is forced to be
\[
    \min_{y \in C(x)} u\bigl(\pi(C(x)), y\bigr),
\]
since $Y$ is chosen (with probability 1) to be the worst-case label within $C(x)$. Thus, for this specific $x$, no matter how we choose $\pi$, its achievable value is at most 
$   \min_{y \in C(x)} u\bigl(\pi(C(x)), y\bigr)$. Also,
\[
    \min_{y \in C(x)} u\bigl(\pi(C(x)), y\bigr) \leq \max_{a \in \mathcal{A}} \min_{y \in C(x)} u\bigl(a, y\bigr),
\]

Because $x$ was arbitrary (among those with $C(x)\neq \emptyset$), repeating the same argument for each such $x$ yields 
\[
    \inf_{p \in \Omega} \nu^*(\pi, p) 
    \;\;\le\;\; 
    \inf_{x \;:\; C(x)\neq\emptyset}
    \max_{a \in \mathcal{A}} \min_{y \in C(x)} u\bigl(a, y\bigr).
\]
In other words, \emph{any} policy $\pi$ cannot achieve a value larger than the above infimum for the inner minimization in \eqref{minimax}.

\paragraph{Part 2: Achievability by the $\max\min$ policy.}
Next, we show that the policy
\[
    \pi^*\bigl(C(x)\bigr)
    \;=\;
    \arg\max_{a \in \mathcal{A}} \min_{y \in C(x)} u(a,y)
\]
matches the upper bound from Part~1 and is thus minimax optimal. Consider any $p \in \Omega$. 

Define
\[
    \nu(x) 
    \;:=\;
    \max_{a \in \mathcal{A}} 
    \min_{y \in C(x)}\,u(a,y).
\]
For those $x\in\mathcal{X}$ such that $C(x)$ is empty put $\nu(x) = \max_{a\in\mathcal{A}}\max_{y\in\mathcal{Y}} u(a, y)$.
We claim that with probability at least $1-\alpha$, the policy $a_{\rm RA}(C(x))$ achieves a utility at least $\nu(x)$. Indeed, on the event $\{Y\in C(X)\}$ (which has probability at least $1-\alpha$ by assumption), it holds that
\[
    u\Bigl(a_{\rm RA}\bigl(C(X)\bigr),\,Y\Bigr)
    \;\ge\;
    \min_{y\in C(X)} u\Bigl(a_{\rm RA}\bigl(C(X)\bigr), y\Bigr)
    \;=\;
    \nu(X).
\]
Thus, setting the target utility at each $x$ to $\nu(x)$ satisfies
\[
    \Pr_{(X,Y)\sim p}\Bigl[u\bigl(a_{\rm RA}(C(X)),Y\bigr)\;\ge\;\nu(X)\Bigr] 
    \;\ge\; 
    1-\alpha.
\]
By definition of $\nu^*(\cdot,\cdot)$, this implies
\[
    \nu^*\bigl(a_{\rm RA}, p\bigr) 
    \;\;\ge\;\; 
    \mathbb{E}_{X\sim p}\bigl[\nu(X)\bigr] 
    \;=\; 
    \mathbb{E}_{X\sim p} \Bigl[\max_{a}\;\min_{y\in C(X)}\,u(a,y) \bigr] \geq 
    \inf_{x \;:\; C(x)\neq\emptyset}
    \max_{a \in \mathcal{A}} \min_{y \in C(x)} u\bigl(a, y\bigr).
\]
Since $p \in \Omega$ was arbitrary, we have shown
\[
    \inf_{p \in \Omega} \nu^*\bigl(a_{\rm RA}, p\bigr)
    \;\;\ge\;\;
    \inf_{x \;:\; C(x)\neq\emptyset}
    \max_{a \in \mathcal{A}} \min_{y \in C(x)} u\bigl(a, y\bigr).
\]
Comparing with the upper bound in Part~1 establishes that $a_{\rm RA}$ attains the best possible (minimax) value. Hence 
\[
    \pi^*(x) 
    \;=\; 
    a_{\rm RA}\bigl(C(x)\bigr)
    \;=\;
    \arg\max_{a\in\mathcal{A}} \min_{y\in C(x)} u(a,y)
\]
solves the minimax problem \eqref{minimax}.

\subsection{Proof of Theorem \ref{prediction_set_equivalence}}

We give a constructive proof by showing how from each solution of RA-DPO we can construct a feasible solution of RA-CPO  without losing any utility, and vice versa. By applying this to the optimal solutions of both problems, we obtain the result of the theorem. 

\noindent\textbf{(I) From RA-DPO to RA-CPO.}
Suppose we have an feasible solution \(\bigl(a(\cdot), \tau(\cdot)\bigr)\) to the RA-DPO problem. 
Consider a pair $(a(\cdot), \nu(\cdot))$ such that $a: \mathcal{X} \to \mathcal{A}$ and $\nu: \mathcal{X} \to [0, u_{\rm max}]$. Here, we have $u_{\rm max} = max_a \max_y u(a,y)$, and as mentioned in Section~\ref{Sec:problem}, since $\nu$ is a utility certificate its value at any $x$ should be less than $u_{\rm max}$.  
Since $(a, \nu)$ is a feasible solution of RA-DPO, it satisfies the following: 
\[
\text{Pr}_{X,Y}\left[u(a(X), Y) \geq v(X) \right] \geq 1- \alpha. 
\]

Define a prediction set 
\begin{equation} \label{proof_equivqlence_C}
 C(x) 
\;=\; 
\Bigl\{\, 
y \:\mid\:
u\bigl(a(x),y\bigr) \;\ge\; \nu(x)
\Bigr\}.   
\end{equation}
In  words, \(C(x)\) is the set of labels \(y\) for which the utility 
\(u\bigl(a(x),y\bigr)\) is at least $\nu(x)$. By definition, we have
$$ \Pr \bigl[ Y \in C(X) \mid X= x \bigr] = \Pr\bigl[  u\bigl(a(X),Y\bigr) \geq \nu(X) \mid X = x \bigr]. $$
\noindent
As a result, we have
\begin{align*}
\Pr\bigl[\,Y \in C(X)\bigr] & = \mathbb{E}_X\left[ \Pr\bigl[ Y \in C(X) \mid X \bigr] \right] \\ 
& =   \mathbb{E}_X\left[ \Pr\bigl[ u(a(X), Y) \geq \nu(X) \mid X \bigr] \right] \\
& = \Pr\bigl[u(a(X), Y) \geq \nu(X)  \bigr] \\
& \geq 1 - \alpha.
\end{align*}
Hence, \(C(\cdot)\) satisfies the marginal coverage constraint of RA-CPO.

\noindent Next, we will improve the prediction sets $C$ to new prediction sets $\tilde{C}$ which satisfy the marginal guarantee but can potentially have larger value under the objective of RA-CPO. The basic idea is to consider points $x \in \mathcal{X}$ such that $C(x)$ is empty and augment an additional element to those empty sets. Recall that we defined $u_{\rm max} := \max_{a \in \mathcal{A}} \max_{y \in \mathcal{Y}} u(a,y)$. Hence, there exists at least one (action, label) pair, which we call $(a_{\rm max},y_{\rm max})$ such that $u_{\rm max} = u(a_{\rm max},y_{\rm max})$. Now, let us define
$$ \mathcal{X}_{\rm empty} = \{x \in \mathcal{X}: C(x) = \emptyset\}, $$
where $\emptyset$ denotes the empty set. We now update $C(\cdot)$ to $\tilde{C}(\cdot)$ as follows: 
\begin{align*}
& \text{- if } x \in \mathcal{X}_{\rm empty}: \quad \tilde{C}(x) = \{y_{\rm max}\},\\
& \text{- if } x \notin \mathcal{X}_{\rm empty}: \quad 
\tilde{C}(x) = C(x).
\end{align*}
Note that we have for any $x \in \mathcal{X}$ that $C(x) \subseteq \tilde{C}(x)$,. Hence,  $\tilde{C}(\cdot)$ satisfies the marginal coverage guarantee as $C(\cdot)$ is marginally valid.

\noindent
Next, we show that the RA-CPO objective under \(\tilde{C}(\cdot)\) 
is at least equal to the RA-DPO objective under \(\bigl(a(\cdot), \nu(\cdot)\bigr)\). 
Recall that the RA-CPO objective evaluated at $\tilde{C}(\cdot)$ is
\[
\E_X \Bigl[
    \max_{a\in\mathcal{A}}\;\min_{y\in \tilde{C}(X)}\,u(a,y)
\Bigr].
\]
To bound this objective value, we consider two cases based on whether of not $x$ belongs to $\mathcal{X}_{\rm empty}$. 

Consider first the case $ x \notin \mathcal{X}_{\rm empty}$. By definition of \(C(x)\) from \eqref{proof_equivqlence_C}, we have \(\min_{y \in C(x)} u(a(x),y) 
\;\ge\; \nu(x)\). Hence, for $x \notin \mathcal{X}_{\rm empty}$, by noting that $C(x) \neq \emptyset$, we have
\[
\max_{a\in \mathcal{A}}
\;\min_{y \in C(x)} u(a,y)
\;\;\ge\;\;
\min_{y \in C(x)} u\bigl(a(x),y\bigr)
\;\;\ge\;\;
\nu(x).
\]
Therefore, for $x \notin \mathcal{X}_{\rm empty}$, by noting that $\tilde{C}(x) = C(x)$, we have
\[
\max_{a\in \mathcal{A}}
\;\min_{y \in \tilde{C}(x)} u(a,y)
\;\;= \;\;
\max_{a\in \mathcal{A}}
\;\min_{y \in C(x)} u(a,y)
\;\;\ge\;\;
\nu(x).
\]
Now, let's consider the other case where $x \in \mathcal{X}_{\rm empty}$. For this case, we not that as $C(x) = \{y_{\rm max}\}$, and from the fact that for any $x \in \mathcal{X}$ we have $\nu(x) \leq u_{\rm \max}$, we can simply derive 
\[
\max_{a\in \mathcal{A}}
\;\min_{y \in \tilde{C}(x)} u(a,y)
\;\;= \;\;
u_{\rm max}
\;\;\ge\;\;
\nu(x).
\]

Therefore, putting the two cases above together, we have proven
\[
\E_X \biggl[
    \max_{a\in\mathcal{A}}\min_{y\in \tilde{C}(X)}u(a,y)
\biggr]
\;\;\ge\;\;
\E_X \bigl[\nu(X)\bigr]
\]
Hence, we have constructed a feasible solution to RA-CPO, namely $\tilde{C}(\cdot)$, that achieves an objective value for RA-CPO which is at least as big as the value of RA-DPO achieved by $(a(\cdot),\nu(\cdot))$. 
Thus, starting from an a solution of RA-DPO, we have constructed a 
solution to RA-CPO with at least the same objective value.

\noindent\textbf{(II) From RA-CPO to RA-DPO.}
Conversely, suppose we have a feasible solution \(C(\cdot)\) to RA-CPO, which is marginally valid, i.e.
\[
\Pr\bigl[Y \in C(X)\bigr] \;\ge\; 1-\alpha
\]
Define a the action policy $a(\cdot)$ and utility certificate $\nu(\cdot)$ as follows:
\[
a(x) 
\;:=\;
\arg\max_{a\in\mathcal{A}}\;
\min_{y\in C(x)} u(a,y),
\quad\quad\text{and}\quad\quad
\nu(x) = \max_{a\in\mathcal{A}}\;
\min_{y\in C(x)} u(a,y).
\]
It is now easy to see that 
$$\Pr\bigl[u(a(X), Y) \geq \nu(X)\bigr] = \Pr\bigl[Y \in C(X)\bigr] \geq 1-\alpha. $$

\noindent
Moreover, by definition of $\nu(x)$, we can easily deduce 
$$\mathbb{E}_X[\nu(X)] = \mathbb{E}_X[\max_{a\in\mathcal{A}}\;
\min_{y\in C(X)} u(a,y)].  $$

Thus, from a feasible solution of RA-CPO, we constructed a feasible solution to RA-DPO that attains the same 
objective value, proving the equivalence in the 
other direction.  

\subsection{Proof of Proposition~\ref{optimal_prediction_set}}
\begin{proof}[Proof of Proposition~\ref{optimal_prediction_set}]
Fix any instance $x\in\mathcal{X}$ and a coverage value $t\in[0,1]$. Recall from \eqref{theta} that
\[
    \btheta(x,t)
    \;=\;
    \max_{a\in\mathcal{A}}
    \text{quantile}_{1-t}\bigl[u(a,Y)\mid X=x\bigr],
    \quad
    \ba(x,t)
    \;=\;
    \arg\max_{a\in\mathcal{A}}
    \text{quantile}_{1-t}\bigl[u(a,Y)\mid X=x\bigr].
\]
We want to show that among all sets $C$ with $\Pr[Y\in C \mid X=x]\ge t,$ the set
\[
    C(x,t)
    \;=\;
    \bigl\{\,y \in \mathcal{Y} \,:\, u\bigl(\ba(x,t),y\bigr)\;\ge\;\btheta(x,t)\bigr\}
\]
maximizes the risk-averse utility $\nu_{\rm RA}(C)=\max_{a\in\mathcal{A}}\min_{y\in C}\,u(a,y),$ and the maximum value is $\btheta(x,t)$.

\paragraph{Step 1: Any set $C$ with coverage $\ge t$ has risk-averse utility at most $\btheta(x,t)$.}
Take an arbitrary set $C\subseteq\mathcal{Y}$ satisfying
\[
    \Pr\bigl[Y\in C \mid X=x\bigr] \;\ge\; t.
\]
Then for any action $a\in\mathcal{A}$,
\[
    \min_{y\in C} u(a,y)
    \;\;\le\;\; \text{quantile}_{1-t}\bigl[u(a,Y)\mid X=x\bigr].
\]
(The reason is that with probability at least $t$, $Y$ lies in $C$, and so the $(1-t)$-quantile of $u(a,Y)$ cannot be smaller than the smallest utility on this event.) Taking the maximum over $a$ yields
\[
    \max_{a\in\mathcal{A}} \min_{y\in C} u(a,y)
    \;\;\le\;\;
    \max_{a\in\mathcal{A}} \text{quantile}_{1-t}\bigl[u(a,Y)\mid X=x\bigr]
    \;=\;
    \btheta(x,t).
\]
Hence no set with coverage at least $t$ can achieve risk-averse utility larger than $\btheta(x,t)$.

\paragraph{Step 2: The set $C(x,t)$ attains coverage $t$ and achieves $\btheta(x,t)$.}
Consider $C(x,t) = \{\,y: u(\ba(x,t),y)\ge \btheta(x,t)\}$. By definition of the $(1-t)$-quantile, we have
\[
    \Pr\bigl[u(\ba(x,t),Y) \;\ge\; \btheta(x,t)\mid X=x\bigr]
    \;\ge\; t,
\]
which implies $\Pr[Y \in C(x,t)\mid X=x]\ge t$. 
Moreover, for every $y\in C(x,t)$, by construction 
\[
    u\bigl(\ba(x,t),y\bigr) \;\ge\; \btheta(x,t),
\]
so
\[
    \min_{y\in C(x,t)} u\bigl(\ba(x,t),y\bigr) 
    \;\ge\;
    \btheta(x,t).
\]
Thus
\[
    \nu_{\rm RA}(C(x,t)) 
    \;=\; 
    \max_{a\in\mathcal{A}} \min_{y\in C(x,t)} u(a,y)
    \;\ge\; 
    \min_{y\in C(x,t)} u\bigl(\ba(x,t),y\bigr)
    \;\ge\;
    \btheta(x,t).
\]
Combining both steps shows that $C(x,t)$ is an optimal choice among all sets with coverage at least $t$, and its risk-averse utility equals $\btheta(x,t)$. 
\end{proof}

\subsection{Proof of Theorem \ref{strong_duality}} \label{app:proof_duality}
We start from the reparametrization of RA-CPO given in 
\eqref{opt_reparametrization}: 

\begin{equation} \label{opt_reparametrization2}
\tag{Reparametrization of RA-CPO}
\begin{aligned}
& \underset{t: \mathcal{X} \to [0,1]}{\text{maximize}} & & \E_X \bigl[\btheta(X, t(X))\bigr] \\
& \text{subject to:} & & \E_X\bigl[t(X)\bigr] \,\ge\, 1-\alpha.
\end{aligned}
\end{equation}

We will further reparametrize this optimization problem and find equivalent relaxations. To do so, let us define 
\begin{equation} \label{rho-t}
\rho(x, t) = \1[t \leq t(x)].
\end{equation}
Also, we will need to consider the derivative of the function $\btheta(x,t)$ in terms of its second argument $t$. Since the function $\btheta$ can be discontinuous, we will have to consider its generalized derivative (i.e. consider delta functions). More precisely, let $\btheta^{'}(x, .):\mathbb{R} \rightarrow\mathbb{R^*}$ where $\mathbb{R^*}$ is the space of functionals on $\mathbb{R}$, such that $\btheta^{'}(x, .)$ is the generalized derivative of $\btheta(x, .)$. In other words, for any real values $a$ and $b$,
\begin{align*}
    \int_a^b \btheta^{'}(x, t) dt = \btheta(x, b) - \btheta(x, a).
\end{align*}
We can just think of $\btheta^{'}(x, t)$ as the derivative $\frac{d}{d t} \btheta(x, t)$.  We can then rewrite the objective of our optimization problem as
\begin{align*}
    \E_X \left[\btheta(X, t)\right] = u_{\rm \max} + \E_X \int_{t=0}^1 \rho(X,t) \btheta^{'}(X, t) dt,
\end{align*}
where we used the fact that $\btheta(x, 0) = u_{\rm max}$ for any $x \in \mathcal{X}$ (by definition), and $\btheta(x,t) - \btheta(x,0) = \int_{0}^t \btheta'(x,t) dt$. Similarly, we can rewrite the constraint as,
\begin{align*}
    \E_X\left[t(X)\right] = \E_X\int_{t=0}^1 \rho(X, t)dt.
\end{align*}
Given the above notation and relations, we can write down the following equivalent reparametrization of \eqref{opt_reparametrization2}. The optimization variable here is the function $\rho(x,t)$ which is a step function according to \eqref{rho-t}. We further note that any such step function defined on the unit interval can be equivalently thought of as a non-increasing function on the unit interval which only takes its value in the set $\{0,1\}$. Hence we arrive at the following integer program that is an equivalent reparametrization of \eqref{opt_reparametrization2} as well as the RA-CPO:
\begin{equation}
\label{inetegr_program}
\tag{Integer Program}
\begin{aligned}
& \underset{\underset{\forall x \in \mathcal{X},t\in [0,1]}{\rho(x,t) \in \{0,1\}}}{\text{maximize}} & & \int_{ \mathcal{X}} \int_{t=0}^1 \rho(x,t) p(x) \btheta^{'}(x, t) dxdt\\
& \text{subject to:} & & \int_{\mathcal{X}}\int_{t=0}^1 p(x) \rho(x, t)dx dt \geq 1-\alpha \\
& & & \rho(x,t) = \text{non-increasing in } t
\end{aligned}
\end{equation}
We now consider a relaxation of the above integer program to the following convex program. As we will see later, this relaxation becomes equivalent to the above integer program as every solution of the relaxed program would correspond to a solution of the integer program. However, for now, let us focus on the following continuous relaxation whose variable $\rho(x,t)$ can take values in the interval $[0,1]$ (in contrast to the original integer program in which $\rho$ could take its value only in the set $\{0,1\}$): 
\begin{equation} \label{relaxed_program}
\tag{Relaxed Program}
\begin{aligned}
& {\text{maximize}} & & \int_{ \mathcal{X}} \int_{t=0}^1 \rho(x,t) p(x) \btheta^{'}(x, t) dxdt\\
& \text{subject to:} & & \int_{\mathcal{X}}\int_{t=0}^1 p(x) \rho(x, t)dx dt \geq 1 - \alpha \\
& & & \rho(x,t) \in [0,1] \quad \forall x \in \mathcal{X},t\in [0,1] \\
& & & \rho(x,t) = \text{non-increasing in } t
\end{aligned}
\end{equation}
Here, the ``optimization variable'' $\rho(x,t)$ belongs to an infinite-dimensional space. Hence, in order to be fully rigorous, we will need to use the duality theory developed for general linear spaces that are not necessarily finite-dimensional. For a reader who is less familiar with infinite-dimensional spaces, what appears below is a direct extension of the duality theory (i.e. writing the Lagrangian) for convex programs in finite-dimensional spaces. 

\noindent Let $\mathcal{F}$ be the set of all measurable function defined on $\mathcal{X} \times [0,1]$. Note that $\mathcal{F}$ is a linear space.  Let $\Omega$ be the set of all the measurable functions on $\mathcal{X} \times [0,1]$ which are non-increasing in $t$ and are bounded between $0$ and $1$; I.e. 
\begin{equation}
\Omega = \left\{ \rho \in \mathcal{F} \text{ s.t. }  \rho: \mathcal{X} \times [0,1] \to [0,1]; \forall x \in \mathcal{X}:   \rho(x,t) \text{ is non-increasing in } t   \right\}
\end{equation}
Note that $\Omega$ is a convex set. We can then rewrite the \eqref{relaxed_program} as follows: 
\[
\begin{aligned}
& {\text{maximize}} & & \int_{ \mathcal{X}}\int_{t=0}^1 \rho(x,t) p(x) \btheta'(x,t) dxdt\\
& \text{subject to:} & & \int_{\mathcal{X}}\int_{t=0}^1 p(x) \rho(x, t)dx dt - (1 - \alpha) \geq 0\\
& & & \rho \in \Omega
\end{aligned}
\]
Moreover, let us define the functional $F: \mathcal{F} \to \mathbb{R}$ as 
\begin{equation} \label{F_def}
F(\rho) = \int_{ \mathcal{X} \times [0,1] }\rho(x,t) p(x) \btheta'(x,t) dxdy, 
\end{equation}
and also define the functional $G: \mathcal{F} \to \mathbb{R}$ as 
\begin{equation} \label{G}
G(\rho) = \int_{\mathcal{X} \times [0,1]} \rho(x, t) p(x) dx dt - (1-\alpha). 
\end{equation}

\noindent Using the above-defined notation, our  program becomes: 
\[
\begin{aligned} \label{primary_proper}
& {\text{maximize}} & & F(\rho)\\
& \text{subject to:} & & G(\rho) \geq 0 \\
& & & \rho \in \Omega
\end{aligned}
\]

Note that the feasibility set of the above program is non-empty, as  $\rho(x,t) = 1 - \alpha $, for all $(x,t) \in \mathcal{X} \times [0,1]$, is a feasible point. Also, $F$ and $G$ are linear functionals over $\rho$. We can now use the duality theory of convex programs in vector spaces (See Theorem 1, Section 8.3 of \cite{luenberger1997optimization}. 
Specifically, let $\text{OPT}$ be the optimal value achievable in the above  program. Then,  there exists a scalar $\beta \geq 0$ such that the following holds:  
\begin{equation} \label{dual_gen}
\text{OPT} = \sup_{\rho \in \Omega} \left\{ F(\rho) +  \beta G(\rho)   \right\},
\end{equation}
Here, note that $\beta$ is the usual Lagrange multiplier.

\noindent By using \eqref{F_def}, in order to solve the optimization in \eqref{dual_gen} we need to solve the following optimization:
\begin{align*} 
& \sup_{\rho \in \Omega} \left\{
\int_{\mathcal{X} \times [0,1]} p(x)\rho(x, t) \left(  \btheta'(x,t) + \beta  \right)  dx dt
\right\} - \beta (1-\alpha).
\end{align*}
We denote the optimal solution of the above optimization problem by $\rho_\beta^*(x,t)$. From the above optimization problem, it is clear that the optimal solution can be determined individually for every $x \in \mathcal{X}$. We will use Lemma~\ref{joz_be_joz}, provided below, to characterize the optimizer of the above optimization. From the lemma, and assuming that, for $\beta >0$,  the maximizer of $\btheta(x,t) + \beta t$ is unique over $t$, almost surely for every $x \in \mathcal{X}$, we obtain:
\begin{equation} \label{rho_beta}
\rho^*_\beta(x,t) = \mathbf{1}\{t \leq t^*(x)\},
\end{equation}
where 
$$ t^*(x) = \arg\max_{s \in [0,1]} \int_{t=0}^s  \left(\btheta(x,t) +  \beta  \right) dt = \arg\max_{s \in [0,1]} \left\{\btheta(x,s) dt +  \beta s  \right\} := \bg(x,\beta).$$

\noindent And the value of $\beta$ should then be chosen such that this optimal solution satisfies the coverage constraint.   

\noindent  We finally note that the optimal solution $\rho_{\beta}(x,t)$ given in \eqref{rho_beta} is integer valued. As a result, there is a zero relaxation gap from the \eqref{inetegr_program}  to the \eqref{relaxed_program}.

\begin{lemma} \label{joz_be_joz}
Let $\theta: [0,1] \to \mathbb{R}$. Also, let $\Omega$ be the set of all the integrable functions $\rho: [0,1] \to [0,1] $ which are non-decreasing. Consider the following optimization problem: 
\[
\begin{aligned} 
& \max_{\rho \in \Omega}   \int_{0}^1 \theta'(t) \rho(t) dt, 
\end{aligned}
\]
where $\theta'$ denotes the (generalized) derivative of $\theta$ with respect to $t$ -- i.e., $\theta(a) - \theta(b) = \int_{a}^b \theta'(t)dt$. Then, the set of solutions of the above optimization problem consists of functions $\rho^*$ such that
$$ \rho^*(t) \in {\rm{ConvexHull}} \left( \left\{  \mathbf{1}[t \leq t^*]\,; \quad t^* \in \arg\max_{t \in[0,1]} \theta(t) \right\} \right).$$
As a corollary, if $\theta$ has a unique maximizer $t^*$, then its corresponding $\rho^*(t) = \mathbf{1}[t \leq t^*]$ is the unique solution of the above optimization problem.
\end{lemma}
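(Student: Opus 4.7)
The plan is to use a layer-cake (level-set) decomposition of the non-increasing feasible $\rho$, which converts the infinite-dimensional optimization into a one-parameter family of independent scalar maximizations of $\theta$ (note: the statement reads ``non-decreasing,'' but the characterization of the optimizers as $\mathbf{1}[t\le t^*]$ makes clear that non-increasing is intended, which I will use throughout). Since $\rho:[0,1]\to[0,1]$ is monotone non-increasing, each super-level set $\{t:\rho(t)\ge s\}$ has the form $[0,\tau(s)]$, where $\tau(s):=\sup\{t\in[0,1]:\rho(t)\ge s\}$ with the convention $\sup\emptyset:=0$. This yields the representation
\[
\rho(t) \;=\; \int_0^1 \mathbf{1}[s\le\rho(t)]\,ds \;=\; \int_0^1 \mathbf{1}[t\le\tau(s)]\,ds,
\]
and the correspondence $\rho\leftrightarrow\tau$ is a bijection (up to null sets) between non-increasing $[0,1]$-valued functions on $[0,1]$ and arbitrary measurable functions $\tau:[0,1]\to[0,1]$.

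Next, I would substitute this representation into the objective and swap the order of integration by Fubini:
\begin{align*}
\int_0^1 \theta'(t)\,\rho(t)\,dt
\;=\; \int_0^1 \int_0^{\tau(s)} \theta'(t)\,dt\,ds
\;=\; \int_0^1 \bigl[\theta(\tau(s)) - \theta(0)\bigr]\,ds.
\end{align*}
The constant $\theta(0)$ does not affect the maximization, so the infinite-dimensional problem reduces to independently choosing, for each $s\in[0,1]$, a value $\tau^*(s)\in[0,1]$ that maximizes $\theta(\tau^*(s))$. Pointwise optimization forces $\tau^*(s)\in\arg\max_{t\in[0,1]}\theta(t)$ for almost every $s$, and any measurable selection produces an optimizer. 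Re-assembling $\rho^*$ via
\[
\rho^*(t) \;=\; \int_0^1 \mathbf{1}[t\le\tau^*(s)]\,ds
\]
exhibits it as a (possibly continuous) convex combination of the extreme indicators $\mathbf{1}[t\le t^*]$ with $t^*\in\arg\max\theta$, which is exactly the claimed convex-hull characterization. The corollary follows immediately: when $\arg\max\theta=\{t^*\}$ is a singleton, every admissible selection satisfies $\tau^*(s)=t^*$ almost everywhere, so $\rho^*(t)=\mathbf{1}[t\le t^*]$ is the unique optimizer.

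The main obstacle is measure-theoretic rather than conceptual, and has three parts: (i) verifying the layer-cake identity rigorously, which requires measurability of $\tau$—this follows from monotonicity and measurability of $\rho$, but one must also handle boundary points where $\rho$ can be discontinuous, which matters on at most a countable set and hence a Lebesgue-null one; (ii) justifying Fubini, which needs integrability of $\theta'(t)\rho(t)$—in the paper's application $\btheta(x,\cdot)$ takes values in $[0,u_{\max}]$, so $\theta'$ can be interpreted as a finite signed Lebesgue--Stieltjes measure of bounded total variation, making the iterated integrals well-defined and equal; and (iii) interpreting ``convex hull'' as general measurable mixtures over $\arg\max\theta$ rather than only finite convex combinations, since an optimal $\rho^*$ may correspond to a continuous distribution over $\arg\max\theta$. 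The reverse inclusion—that every mixture of the described form is optimal—comes for free by running the same computation: any such $\rho$ attains the objective value $\max_t\theta(t)-\theta(0)$, which is the upper bound obtained by pointwise maximization inside the $s$-integral.
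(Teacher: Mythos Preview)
Your proof is correct and takes a genuinely different route from the paper. The paper integrates by parts,
\[
\int_0^1 \rho\,\theta' \;=\; \rho(1)\theta(1) - \rho(0)\theta(0) - \int_0^1 \rho'\,\theta,
\]
and then uses that $-\rho'\ge 0$ (since $\rho$ is non-increasing) to bound $-\int_0^1 \rho'\theta \le \theta_{\max}\bigl(\rho(0)-\rho(1)\bigr)$, obtaining the upper bound $\theta_{\max}-\theta(0)$ after some algebra; the tightness and characterization of optimizers are then argued by inspecting when the inequality steps become equalities. You instead slice $\rho$ horizontally via the layer-cake identity $\rho(t)=\int_0^1\mathbf{1}[t\le\tau(s)]\,ds$, which decouples the problem into independent scalar maximizations of $\theta$ over $\tau(s)$.

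Your approach is cleaner in two respects: it never needs the generalized derivative of $\rho$ (the paper silently requires $\rho'$ to make sense as a signed measure for the integration by parts), and it exhibits every optimizer directly as a mixture of the indicator extreme points, so the convex-hull characterization falls out immediately rather than through a separate tightness analysis. The paper's route is more bare-hands and perhaps closer to the intuition that $-\rho'$ plays the role of a distribution over thresholds. One small correction: the map $\rho\mapsto\tau$ is a bijection onto the \emph{non-increasing} functions $\tau:[0,1]\to[0,1]$, not onto arbitrary measurable $\tau$ (distinct $\tau$ sharing the same decreasing rearrangement yield the same $\rho$). This does not affect your argument---you only need that every feasible $\rho$ admits such a representation and that every such $\tau$ produces a feasible $\rho$---but ``bijection'' overstates it.
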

\begin{proof}
For every $\rho \in \Omega$ write using integration by parts: 
$$ \int_{0}^1 \rho(t) \theta'(t) dt = \rho(1) \theta(1) - \rho(0) \theta(0) - \int_0^1 \rho'(t) \theta(t) dt   $$
Let us define $\theta_{\rm max} := \max_{t\in [0,1]} \theta(t)$. Since $\rho(t)$ is a non-increasing function, we have 
$$ - \int_0^1 \rho'(t) \theta(t) dt \stackrel{(a)}{\leq} -\theta_{\rm max} \int_{0}^1 \rho'(t) dt = \theta_{\rm max} (\rho(0) - \rho(1)),$$
where the step (a) is obtained because $-\rho(t)$ is non-negative.
As a result, we obtain 
\begin{align*}
\int_{0}^1 \rho(t) \theta'(t) dt 
& \leq \theta_{\rm max} \left(\rho(0) - \rho(1) \right) + \rho(1) \theta(1) - \rho(0) \theta(0) \\
&=
\left( \theta_{\rm max} - \theta(0) \right) \left(\rho(0) - \rho(1) \right) + \rho(1) \left( \theta(1) - \theta(0) \right)  \\
&\stackrel{(b)}{\leq} 
\theta_{\rm max} - \theta(0), \\
\end{align*}
where step (b) is obtained since $\rho(0), \rho(1) \in [0,1]$ and $\theta(1) \leq \theta_{\rm max}$. 

\noindent  Now, it it easy to see that if $t^*$ is such that $\theta(t^*) = \theta_{\rm max}$ then both steps (a), (b) will be equality (instead of an inequality) for the following function
$$\rho^*(t) =  \mathbf{1}[t \leq t^*].$$
On the other hand, for step (a) to be tight we must have the following: For every point $t$ such that $\rho'(t) < 0$, we have $\theta(t) = \theta_{\rm max}$. This shows that an optimal solution must be in the convex hull defined in the theorem, and 
hence, the result of the theorem follows. The uniqueness also follows similarly.
\end{proof}

\noindent\textbf{Breaking Ties.} 
Theorem~\ref{strong_duality} was stated and proved under the assumption that the function $\bg$, given in \eqref{g-beta}, is well-defined -- i.e. for any $\beta>0$ with probability $1$ we have that the maximizer of the function $\btheta(x,s) + \beta s$, over the choice of $s \in [0,1]$, is unique. We now focus on the case of having ties, i.e., special, pathological settings where more than one maximizer exists with positive probability. We will explain an approach on how ties between the maxima could be broken in such cases in a lossless manner. The idea is simple -- we will perturb the distribution $p(y|x)$ by a small noise to break the ties. 

\noindent Formally, fix an arbitrarily small value $\epsilon > 0$. Consider a point $x$, and a value $\beta > 0$ such that the maximizer in not unique. Sample a probability-vector $\pi_{x} \in \Delta_{|\mathcal{Y}|}$ uniformly at random (i.e. $\pi$ is a random distribution over $\mathcal{Y}$). Then, let $p_{\epsilon}(y|x) = (1-\epsilon)p(y|x) + \epsilon \pi_x.$
We then construct the functions $\btheta(x,t)$ and  $\bg(x,\beta)$ exactly as before but with the perturbed distribution $p_\epsilon$. We call this setting the perturbed setting.

\noindent We denote the perturbed distribution by $p_\epsilon(x,y) = p(x) p_\epsilon(y|x)$. Note that the original distribution of the data has the form $p(x,y) = p(x) p(y|x)$, and for every $x \in \mathcal{X}$, we have $|| p(y|x) - p_\epsilon(y|x)||_{\rm TV} \leq \epsilon$, where TV denotes the total variation distance. 

We will show that:
\begin{enumerate}
\item In the perturbed setting, for any $\beta >0$ the maximizer of $\btheta(x,s) + \beta s$ over the choice of $s\in[0,1]$ is unique with probability $1$.   
\item Consider a prediction set constructed under the perturbed distribution with marginal coverage at least $1-\alpha$ and average maxmin utility (under the objective of RA-CPO) denoted by $\bar{u}$. Then, this prediction set achieves marginal coverage of at least $1-\alpha - \epsilon$ and utility $\bar{u}$ under the original distribution. The other direction is also true: A marginally valid prediction set of the original distribution with maxmin utility $\bar{u}$, achieves $1-\alpha - \epsilon$ coverage under the perturbed distribution with the same expected utility. Also, note that any prediction set with marginal coverage $1-\alpha - \epsilon$ and  expected maxmin utility $\bar{u}$ can be easily made into a prediction set of marginal coverage $1- \alpha$ with expected maxmin utility $\bar{u} - c \epsilon $, where $c$ is a finite number independent of $\epsilon$ 
(simply, take a small part of the space $\mathcal{X}$ and assign the full set $\mathcal{Y}$ as the prediction set to the points in that part).

\item From point (1) above, we can use the result of the theorem and construct the optimal prediction sets for the perturbed distribution using the exact same procedure explained in the theorem. 
\item From  point (2), and by choosing an arbitrarily small value $\epsilon$, we can obtain a solution of the original problem with small, $c \epsilon$, loss in the optimal utility. The gap will vanish by decreasing $\epsilon$.
\end{enumerate}
The above sequence of points shows that the randomized tie-breaking explained above is valid. Hence, we need to argue the validity  of points (1) and (2). 

\noindent  We first prove point (2) since it is easier. Consider  prediction sets $C(x)$, $x \in \mathcal{X}$, that achieve coverage at least $1-\alpha$ under the perturbed distribution. Note that for every $x \in \mathcal{X}$ the perturbed distribution $p_\epsilon(x,y)$ can be written as $p_\epsilon(x,y) = p(x) p_{\epsilon}(y|x)$. Further note that for a given $x$ we have $|| p(y|x) - p_\epsilon(y|x)||_{\rm TV} \leq \epsilon$, where TV is the total variation distance. Hence, we can conclude that (i) the marginal coverage of $C(x)$ under the original distribution $p(x,y)$ is at least $1-\alpha - \epsilon$, and (ii) since both distributions induce the same marginal distribution on $x$ then the expected maxmin utility (objective of RA-CPO) is the same under both distributions. The same argument can be made for a prediction set which is marginally valid under the original distribution. Finally, the last part o point (2) can be proven easily. Consider prediction sets $C(x)$ with marginal coverage $1-\alpha - \epsilon$. Let $A \in \mathcal{X}$
such that for every $x \in A$ the conditional coverage of $C(x)$, conditioned on $X = x$, is at most $1-\alpha/2$. Then, it is not hard to show using the Markov inequality that $p(A) \geq \alpha / (2(1-\alpha))$. Now, among the set $A$ we chose a subset $A'$ such that $p(A') = 2\epsilon / \alpha$. Note that for $\epsilon$ sufficiently small this is always possible since $p(x)$ is an atom-less (continuous) distribution. Finally, for every $x \in A'$ we will change its prediction set $C(x)$ to the full set $\mathcal{Y}$. It is now easy to see that the new prediction sets will have marginal coverage at least $1-\alpha$. Also, since we are changing only the prediction sets corresponding to points inside the set $A'$, then the utility loss will be at most $p(A') \times u_{\rm max}$, where $u_{\rm max} = \max_a \max_y u(a,y)$. Hence, the utility loss will be at most $c\epsilon$ with $c = 2u_{\rm max}/\alpha $.     

\noindent  Let us now proceed with arguing the validity of point (1). Fix a point $x \in \mathcal{X}$.  Then from \eqref{theta} it is clear that $\btheta(x,s)$ is piece-wise constant function over $s$ with at most $|\mathcal{Y}|+1$ jumps, where we are always considering $s=1$ as a jump.  Let the set of jumps of the function $\btheta(x,s)$ be denoted  by $s_1, s_2, \cdots, s_J$, where $J \leq |\mathcal{Y}+1|$.

\noindent  Since $\btheta(x,s)$ is piece-wise constant, then for $\beta >0$ maximizers of $\btheta(x,s) + \beta s$, over $s \in [0,1]$, happen only at the points where $\btheta(x,s)$ has a jump. Now consider two different jumps, $s_i, s_j$, of the function $\btheta(x,s)$. If both of these jumps are maximizers for the function $\btheta(x,s) + \beta s$, then we must have: 
\begin{equation} \label{beta-jump}
\btheta(x,s_i) + \beta s_i = \btheta(x,s_j) + \beta s_j \longrightarrow  \beta = \left(\btheta(x,s_j) - \btheta(x,s_i) \right)/\left(s_i - s_j \right). 
\end{equation}
This shows that for every pair of jumps $(s_i,s_j)$ there exists at most one $\beta$ for which both $s_i, s_j$ are the maximizers of the function $\btheta(x,s) + \beta s$. Now, since we have at most $(|\mathcal{Y}| +1)^2$ of such pairs of jumps $(s_i,s_j)$, then we can conclude the following: For every $x \in \mathcal{X}$, the set of $\beta$'s such that the maximizer of the function $\btheta(x,s) + \beta s$ over $s$ is not unique is a finite set with at most $(|\mathcal{Y}| + 1)^2$ elements. Now, for a given $x$, take one of such $\beta$'s and denote it by $\beta_x$. We claim that the following set,
$$\mathcal{Z} = \{ x' \in \mathcal{X} \text{ s.t. the maximizer of $\btheta(x',s) + \beta_x s$ is not unique} \}, $$
has measure $0$ -- i.e. $p(\mathcal{Z}) = 0$. The reason is that for any $x' \in \mathcal{X}$, the set of jumps for the function $\btheta(x',s)$, which we denote by $\{s'_1, s'2, \cdots, s'_K\}$ is a function of the random noise in the conditional probability $p_\epsilon(y|x')$. As a result, the probability that a pair of jumps $s'_i, s'_j$ satisfy the following equation (which is derived similar to \eqref{beta-jump}) 
$$ \beta_x = \frac{\btheta(x', s'_j) - \btheta(x', s'_i) }{s'_i - s'_j} $$
is zero. This is because for $x' \neq x$ the noise added inside $p_\epsilon(y|x')$ is chosen completely independent of $x$ (and hence it's independent of the value $\beta_x$). From this fact, and the fact that each $x$ has only a finite number of $\beta_x$'s we can conclude that the for any $x$ the following set
\begin{align*}
\mathcal{Z}_x = \{ x' \in \mathcal{X} \text{ s.t. there } & \text{exists a $\beta >0$ for which} \\ &\text{both functions $\btheta(x',s) + \beta s$  and $\btheta(x,s) + \beta s$ have at least two maximizers} \}, 
\end{align*}
has measure $0$. 
Using this fact, and the fact that $p(x)$ is atom-less we can now conclude the proof of point (1).  The reason is that, if (1) is not true, then there exists a value $\beta >0$ such that with non-zero probability over $x$ the maximizer of $\btheta(x,s) + \beta s$ is non-unique. Hence, there exists a point $x$ for which the set $\mathcal{Z}_x$ defined above must have non-zero measure. And this is a contradiction.

\subsection{Proof of Theorem~\ref{coverage_theorem}}
We have:
\begin{align}
    \Pr\bigl[Y_{\rm test} \in C_{\rm RAC}(X_{\rm test})\bigr]
    &\overset{\mathrm{(a)}}{=}\Pr\bigl[Y_{\rm test} \in \hat{C}\bigl(X_{\rm test}; \hat{\beta}_{Y_{\rm test}}\bigr)\bigr]
    \nonumber\\
    & = \E [\1[Y_{\rm test} \in \hat{C}\bigl(X_{\rm test}; \hat{\beta}_{Y_{\rm test}}\bigr)]]
    \nonumber\\
    &\overset{\mathrm{(b)}}{=}
    \E \bigg[\frac{1}{n+1}\bigg(\sum_{i=1}^n \1[Y_i \in \hat{C}\bigl(X_i; \hat{\beta}_{Y_{\rm test}}\bigr)] + \1[Y_{\rm test} \in \hat{C}\bigl(X_{\rm test}; \hat{\beta}_{Y_{\rm test}}\bigr)]\bigg)\bigg]\nonumber\\
    &\overset{\mathrm{(c)}}{\geq}
    1-\alpha.
\end{align}
where, (a) comes form the definition of the prediction set. (b) comes from the fact that $$\Bigl\{\,
      \bigl(X_1,Y_1,\hat{\beta}_{Y{\rm test}}\bigr),\;\dots,\;
      \bigl(X_n,Y_n,\hat{\beta}_{Y{\rm test}}\bigr),\;
      \bigl(X_{\rm test},Y_{\rm test},\hat{\beta}_{Y_{\rm test}}\bigr)
    \Bigr\}$$
    are exchangeable, which is due to the fact that (i) the exchangeability of the original $(n+1)$ pairs \(\{(X_i,Y_i)\}\cup\{(X_{\rm test},Y_{\rm test})\}\), and (ii) the symmetric way in which Algorithm~\ref{alg} assigns $\hat{\beta}_y$ to each $y\in\mathcal{Y}$. Finally, (c) follows from the definition of $\hat{\beta}_{Y_{\rm test}}$.

\section{Utility function for medical experiment}\label{utility_AI}
\noindent Our results and findings in the medical experiment of section \ref{exp:med}, can be reproduced with any other reasonable design of utility function. The goal of that experiment is not to capture a precise characterization of difficulties and consequences in medical decision making but rather to pinpoint the advantages of a risk averse calibration approach in sensitive tasks like medical decision making. Of course, in real world scenarios, a more comprehensive approach is needed to define a principled utility function that captures the interests of all the involving parties. That being said, for the sake of proof of concept, we designed a utility matrix using the ChatGPT o1 model by OpenAI. The following is an AI generated text justifying the proposed utility matrix.

\noindent\textbf{Clinical Justification of the Utility Matrix}

\noindent The utility matrix presented in Table~\ref{tab:utility_1} reflects the balance of benefits and harms associated with different medical actions for each true clinical condition. Each utility value is determined based on standard clinical guidelines and evidence-based practices, ensuring that the chosen actions optimize patient outcomes while minimizing potential risks.

\noindent \textbf{Normal (No Disease)}
\begin{itemize}
    \item \textbf{No Action = 10} \\
    For a patient who is truly healthy, no intervention is optimal as it avoids unnecessary costs, side effects, and patient anxiety. Unwarranted use of antibiotics or quarantine measures can lead to adverse effects and resource wastage \cite{ref1}.
    
    \item \textbf{Antibiotics = 2} \\
    Prescribing antibiotics to a healthy individual can contribute to antimicrobial resistance and cause side effects without any clinical benefit \cite{ref2}.
    
    \item \textbf{Quarantine = 2} \\
    Quarantining a healthy person imposes unnecessary social and psychological burdens without providing any medical advantage \cite{ref3}.
    
    \item \textbf{Testing = 4} \\
    While testing can confirm the absence of disease, routine testing in healthy individuals is often not cost-effective and may lead to unnecessary follow-up procedures \cite{ref4}.
\end{itemize}

\textbf{Pneumonia}
\begin{itemize}
    \item \textbf{No Action = 0} \\
    Untreated pneumonia can lead to rapid deterioration and increased mortality, making inaction highly detrimental \cite{ref5}.
    
    \item \textbf{Antibiotics = 10} \\
    Timely administration of appropriate antibiotics is crucial for treating bacterial pneumonia, improving survival rates and reducing complications \cite{ref5, ref6}.
    
    \item \textbf{Quarantine = 3} \\
    While some forms of pneumonia may be contagious, standard infection control measures are generally more beneficial than full quarantine, especially when bacterial pneumonia is suspected \cite{ref5}.
    
    \item \textbf{Testing = 7} \\
    Diagnostic tests such as chest imaging and sputum cultures are essential for confirming pneumonia and guiding antibiotic therapy \cite{ref5}.
\end{itemize}

\textbf{COVID-19}
\begin{itemize}
    \item \textbf{No Action = 0} \\
    Ignoring a COVID-19 infection can result in severe disease progression and widespread transmission, making inaction extremely harmful \cite{ref3}.
    
    \item \textbf{Antibiotics = 3} \\
    Since COVID-19 is viral, antibiotics are generally only useful if there is a suspected secondary bacterial infection \cite{ref7}.
    
    \item \textbf{Quarantine = 10} \\
    Quarantining individuals with COVID-19 is essential for controlling the spread of the virus and protecting public health \cite{ref3}.
    
    \item \textbf{Testing = 8} \\
    Confirmatory testing is vital for diagnosing COVID-19 and guiding appropriate interventions, including quarantine and specific therapies \cite{ref3}.
\end{itemize}

\textbf{Lung Opacity}
\begin{itemize}
    \item \textbf{No Action = 1} \\
    Ignoring lung opacities can lead to missed diagnoses of serious conditions such as malignancies or tuberculosis, posing significant risks \cite{ref8}.
    
    \item \textbf{Antibiotics = 4} \\
    Empirical antibiotic therapy may be beneficial if an infectious etiology is suspected, but it is not universally appropriate and may lead to resistance \cite{ref2, ref5}.
    
    \item \textbf{Quarantine = 4} \\
    Quarantine may be necessary if the underlying cause of the opacity is contagious, but many causes do not require isolation \cite{ref2}.
    
    \item \textbf{Testing = 10} \\
    Comprehensive diagnostic evaluation is crucial for determining the exact cause of lung opacities, guiding targeted treatment and preventing misdiagnosis \cite{ref8}.
\end{itemize}
\begin{table}[t]
\centering
\small  
\setlength{\tabcolsep}{2pt}  
\renewcommand{\arraystretch}{0.9}  
\begin{tabular}{lcccc}
\toprule
\textbf{True Label} & \textbf{No Action} & \textbf{Antibiotics} & \textbf{Quarantine} & \textbf{Testing} \\
\midrule
Normal       & 10  & 2  & 2  & 4  \\
Pneumonia    & 0   & 10 & 3  & 7  \\
COVID-19     & 0   & 3  & 10 & 8  \\
Lung Opacity & 1   & 4  & 4  & 10 \\
\bottomrule
\end{tabular}
\caption{Utility matrix for the four-class chest X-ray task.}
\label{tab:utility_1}
\end{table}

\textbf{Key Takeaways}
\begin{enumerate}
    \item \textbf{Benefit vs. Harm}: The utility scores balance the potential benefits of medical interventions against their associated risks and costs.
    \item \textbf{Disease-Specific Standard of Care}: Treatments are aligned with established clinical guidelines specific to each condition.
    \item \textbf{Avoidance of Unnecessary Interventions}: The matrix discourages overtreatment in healthy individuals to prevent adverse effects and resource wastage.
\end{enumerate}

\noindent Overall, the utility matrix aligns with standard clinical guidelines by advocating for appropriate treatment of infections, isolation of contagious diseases, thorough diagnostic evaluations for ambiguous findings, and avoiding unnecessary interventions in healthy patients.

\end{document}